\def\eqref#1{equation~\ref{#1}}
\def\1{\bm{1}}
\DeclareMathAlphabet{\mathsfit}{\encodingdefault}{\sfdefault}{m}{sl}
\SetMathAlphabet{\mathsfit}{bold}{\encodingdefault}{\sfdefault}{bx}{n}
\theoremstyle{plain}
\newtheorem{theorem}{Theorem}[section]
\newtheorem{lemma}[theorem]{Lemma}
\newtheorem{corollary}[theorem]{Corollary}
\theoremstyle{definition}
\theoremstyle{remark}
\definecolor{PZH_color}{RGB}{0, 102, 204}
\definecolor{xzh_color}{RGB}{0, 175, 0}
\newcommand{\todo}[1]{\textcolor{orange}{[TODO: #1]}}
\newcommand{\old}[1]{[OLD: #1]}
\newcommand{\bz}[1]{\textcolor{red}{[BZ: #1]}}
\newcommand{\pzh}[1]{\textcolor{PZH_color}{[PZH: #1]}}
\newcommand{\xzh}[1]{\textcolor{xzh_color}{[XZH: #1]}}
\newcommand{\todo}[1]{}
\newcommand{\old}[1]{}
\newcommand{\bz}[1]{}
\newcommand{\xzh}[1]{}
\newcommand{\pzh}[1]{}
\begin{document}

% \title{Policy Optimization with Teacher-Student\\ Shared Control}
\title{Guarded Policy Optimization with Imperfect Online Demonstrations}
% Authors must not appear in the submitted version. They should be hidden
% as long as the \iclrfinalcopy macro remains commented out below.
% Non-anonymous submissions will be rejected without review.

% The \author macro works with any number of authors. There are two commands
% used to separate the names and addresses of multiple authors: \And and \AND.
%
% Using \And between authors leaves it to \LaTeX{} to determine where to break
% the lines. Using \AND forces a linebreak at that point. So, if \LaTeX{}
% puts 3 of 4 authors names on the first line, and the last on the second
% line, try using \AND instead of \And before the third author name.

\newcommand{\fix}{\marginpar{FIX}}
\newcommand{\new}{\marginpar{NEW}}

\iclrfinalcopy % Uncomment for camera-ready version, but NOT for submission.
% The \author macro works with any number of authors. There are two commands
% used to separate the names and addresses of multiple authors: \And and \AND.
%
% Using \And between authors leaves it to LaTeX to determine where to break the
% lines. Using \AND forces a line break at that point. So, if LaTeX puts 3 of 4
% authors names on the first line, and the last on the second line, try using
% \AND instead of \And before the third author name.

\author{%
  % David S.~Hippocampus\thanks{Use footnote for providing further information
  %   about author (webpage, alternative address)---\emph{not} for acknowledging
  %   funding agencies.} \\
  % Department of Computer Science\\
  % Cranberry-Lemon University\\
  % Pittsburgh, PA 15213 \\
  % \texttt{hippo@cs.cranberry-lemon.edu} \\
  Zhenghai Xue$^1$,\quad Zhenghao Peng$^2$,\quad Quanyi Li$^3$,\quad Zhihan Liu$^4$,\quad Bolei Zhou$^2$\\
  $^1$Nanyang Technological University, Singapore, $^2$ University of California, Los Angeles,\\
  $^3$The University of Edinburgh, $^4$Northwestern University
  % examples of more authors
  % \And
  % Coauthor \\
  % Affiliation \\
  % Address \\
  % \texttt{email} \\
  % \AND
  % Coauthor \\
  % Affiliation \\
  % Address \\
  % \texttt{email} \\
  % \And
  % Coauthor \\
  % Affiliation \\
  % Address \\
  % \texttt{email} \\
  % \And
  % Coauthor \\
  % Affiliation \\
  % Address \\
  % \texttt{email} \\
}

\maketitle
\begin{abstract}
The Teacher-Student Framework~(TSF) is a reinforcement learning setting where a teacher agent guards the training of a student agent by intervening and providing online demonstrations. Assuming optimal, the teacher policy has the perfect timing and capability to intervene in the learning process of the student agent, providing safety guarantee and exploration guidance. Nevertheless, in many real-world settings it is expensive or even impossible to obtain a well-performing teacher policy. In this work, we relax the assumption of a well-performing teacher and develop a new method that can incorporate arbitrary teacher policies with modest or inferior performance. We instantiate an Off-Policy Reinforcement Learning algorithm, termed \textbf{T}eacher-\textbf{S}tudent \textbf{S}hared \textbf{C}ontrol~(\textbf{TS2C}), which incorporates teacher intervention based on trajectory-based value estimation. Theoretical analysis validates that the proposed TS2C algorithm attains efficient exploration and substantial safety guarantee without being affected by the teacher's own performance. 
Experiments on various continuous control tasks show that our method can exploit teacher policies at different performance levels while maintaining a low training cost.
%and can be trained from various common RL algorithms, as well as maintaining low training cost. 
Moreover, the student policy surpasses the imperfect teacher policy in terms of higher accumulated reward in held-out testing environments. Code is available at \url{https://metadriverse.github.io/TS2C}.
%Different from previous work that the learner can not excel the teacher, in TS2C the student's performance is not bounded by the teacher. 
%The resulting teacher-student shared control achieves high learning efficiency and safe exploration.
\end{abstract}

\section{Introduction}

\pzh{``guarded policy optimization'', ``teacher-student framework'', ``RL with shared control''. These notations are messy. }
\xzh{I removed the notion of RL with shared control. Guarded Policy Optimization is incorporated in the introduction.}
In Reinforcement Learning~(RL), the Teacher-Student Framework~(TSF)~\citep{zimmer2014teacher, kelly2019hg} incorporates well-performing neural controllers or human experts as teacher policies in the learning process of autonomous agents. At each step, the teacher guards the free exploration of the student by intervening when a specific intervention criterion holds. Online data collected from both the teacher policy and the student policy will be saved into the replay buffer and exploited with Imitation Learning or Off-Policy RL algorithms. Such a guarded policy optimization pipeline can either provide safety guarantee~\citep{peng2021safe} or facilitate efficient exploration~\citep{torrey2013teaching}. 

% The expert steers the agent away from dangerous states and then handles the control back to the learner. Compared to the conventional Reinforcement Learning (RL) setting, the Expert-in-the-loop RL ensures the high sample efficiency and the safe exploration of the learning agent. 

% Despite the success of Expert-in-the-loop RL, 
The majority of RL methods in TSF assume the availability of a well-performing teacher policy~\citep{spencer2020learning, torrey2013teaching} so that the student can properly learn from the teacher's demonstration about how to act in the environment.
The teacher intervention is triggered when the student acts differently from the teacher~\citep{peng2021safe} or when the teacher finds the current state worth exploring~\citep{chisari2021correct}.
This is similar to imitation learning where the training outcome is significantly affected by the quality of demonstrations~\citep{kumar2020conservative,fujimoto2019off}. Thus with current TSF methods if the teacher is incapable of providing high-quality demonstrations, the student will be misguided and its final performance will be upper-bounded by the performance of the teacher. 
However, it is time-consuming or even impossible to obtain a well-performing teacher in many real-world applications such as object manipulation with robot arms~\citep{yu2020meta} and autonomous driving~\citep{li2021metadrive}. As a result, current TSF methods will behave poorly with a less capable teacher.

% The coach of a sprinter does not necessarily need to run faster than the sprinter. 
In the real world, the coach of Usain Bolt does not necessarily need to run faster than Usain Bolt. Is it possible to develop a new interactive learning scheme where a student can outperform the teacher while retaining safety guarantee from it?
%We want to discover whether it is possible to train a student that can excel its teacher, instead of being constrained by the teacher as previous Expert-in-the-loop RL methods.
In this work we develop a new guarded policy optimization method called \textbf{T}eacher-\textbf{S}tudent \textbf{S}hared \textbf{C}ontrol~(\textbf{TS2C}). It follows the setting of a teacher policy and a learning student policy, but relaxes the requirement of high-quality demonstrations from the teacher. 
%but relaxes the access to a well-performing \textit{expert} to a mediocre \textit{teacher}. 
A new intervention mechanism is designed: Rather than triggering intervention based on the similarity between the actions of teacher and student, the intervention is now determined by a trajectory-based value estimator.
% This estimator accounts for the objective evaluation of different policies. \pzh{<- What does this sentence mean?}
The student is allowed to conduct an action that deviates from the teacher's, as long as its expected return is promising.
% In this case, the student's action violating teacher's intention will be tolerated as long as it has promising expected return in the future. 
By relaxing the intervention criterion from step-wise action similarity to trajectory-based value estimation, the student has the freedom to act differently when the teacher fails to provide correct demonstration and thus has the potential to outperform the imperfect teacher. We conduct theoretical analysis and show that in previous TSF methods the quality of the online data-collecting policy is upper-bounded by the performance of the teacher policy. In contrast, TS2C is not limited by the imperfect teacher in upper-bound performance, while still retaining a lower-bound performance and safety guarantee.
% \pzh{We can said we have a lower-bound performance and safety guarantee, assuming we know the bound of expert's performance and safety.}

% Therefore TS2C serves as a new policy distillation method, transferring knowledge from teacher to student policies. \bz{Is it a new policy distillation method or a new interactive RL/IL method? We should be mindful about the claim. Policy distillation is just a niche method, we should have larger claim for the purpose of our method.}

% In the practical algorithm, the Q-value estimator is obtained with a few warmup timesteps and off-policy policy evaluation. The student network is updated with off-policy RL algorithms using the trajectories generated by teacher-student shared control. 
Experiments on various continuous control environments show that under the newly proposed method, the learning student policy can be optimized efficiently and safely under different levels of teachers while other TSF algorithms are largely bounded by the teacher's performance. Furthermore, the student policies trained under the proposed TS2C substantially outperform all baseline methods in terms of higher efficiency and lower test-time cost, supporting our theoretical analysis.

\section{Background}
% \textcolor{red}{BZ: It seems to me that the preliminaries should be moved to the methodology section, instead of putting before Related Work. At least it should be after the related work or so?}

\subsection{Related Work}
\label{section:rw}
\paragraph{The Teacher-Student Framework}

The idea of transferring knowledge from a teacher policy to a student policy has been explored in reinforcement learning~\citep{zimmer2014teacher}. 
It improves the learning efficiency of the student policy by leveraging a pretrained teacher policy, usually by adding auxiliary loss to encourage the student policy to be close to the teacher policy~\citep{traore2019discorl}. Though our method follows teacher-student transfer framework, an optimal teacher is not a necessity. During training, agents are fully controlled by either the student~\citep{traore2019discorl} or the teacher policy~\citep{rusu2016policy}, while our method follows intervention-based RL where a mixed policy controls the agent. Other attempts to relax the need of well-performing teacher models include student-student transfer~\citep{lin2017collaborative, lai2020dual}, in which heterogeneous agents exchange knowledge through mutual regularisation~\citep{pmlr-v157-zhao21b,peng2020non}. 

\vspace{-0.2cm}
\paragraph{Learning from Demonstrations} 
% Instead of interacting with the environment by the agents themselves, agents can learn from static expert demonstration data . 

Another way to exploit the teacher policy is to collect offline and static demonstration data from it. The learning agent will regard the demonstration as optimal transitions to imitate from.
If the data is provided without reward signals, agent can learn by imitating the teacher's policy distribution~\citep{ly2020learning}, matching the trajectory distribution~\citep{ho2016generative, xu2020error} or learning a parameterized reward function with inverse reinforcement learning~\citep{abbeel2004apprenticeship,fu2017learning}. 
There are also algorithms that take the imperfect demonstrations into consideration~\citep{zhang2021confidence,xu2022discriminator}. 
With additional reward signals, agents can perform Bellman updates pessimistically, as most offline reinforcement learning algorithms do~\citep{levine2020offline}. 
% The conservative Bellman update can be performed either by restricting the overestimation of Q-function learning~\citep{fujimoto2019off, kumar2020conservative} or by involving model-based uncertainty estimation~\citep{yu2020mopo,chen2021offline}. 
In contrast to the offline learning from demonstrations, in this work we focus on the online deployment of teacher policies with teacher-student shared control and show its superiority in reducing the state distributional shift, improving efficiency and ensuring training-time safety.

\vspace{-0.2cm}
\paragraph{Intervention-based Reinforcement Learning} 
Intervention-based RL enables both the expert and the learning agent to generate online samples in the environment. The switch between policies can be random~\citep{ross2011reduction}, rule-based~\citep{parnichkun2022reil} or determined by the expert, either through the manual intervention of human participators~\citep{abel2017agent, chisari2021correct,li2022efficient} or by referring to the policy distribution of a parameterized expert~\citep{peng2021safe}. More delicate switching algorithms include RCMP~\citep{Silva2020UncertaintyAwareAA} which asks for expert advice when the learner's action has high estimated uncertainty. RCMP only works for agents with discrete action spaces, while we investigate continuous action space in this paper. Also,~\citet{ross2014reinforcement} and~\citet{sun2017deeply} query the expert to obtain the optimal value function, which is used to guide the expert intervention. These switching mechanisms assume the expert policy to be optimal, while our proposed algorithm can make use of a suboptimal expert policy.
To exploit samples collected with different policies, \citet{ross2011reduction} and~\citet{kelly2019hg} compute behavior cloning loss on samples where the expert policy is in control and discard those generated by the learner. 
Other algorithms~\citep{mandlekar2020human, chisari2021correct} assign positive labels on expert samples and compute policy gradient loss based on the pseudo reward. Some other research works focus on provable safety guarantee with shared control~\citep{peng2021safe, nolan2021safe}, while we provide an additional lower-bound guarantee of the accumulated reward for our method.

\pzh{\textbf{A large body of works on Learning from imperfect demonstration is missing. We can add few sentences starting here:} }
\xzh{Can offline RL be regarded as learning from imperfect demonstrations? They have been included in the previous paragraph.}

\subsection{Notations}
We consider an infinite-horizon Markov decision process~(MDP), defined by the tuple $M=\left\langle \mathcal{S}, \mathcal{A}, P, R, \gamma, d_{0}\right\rangle$ consisting of a finite state space $\mathcal{S}$, a finite action space $\mathcal{A}$, the state transition probability distribution $P:\mathcal{S}\times\mathcal{A}\times\mathcal{S}\to[0,1]$, the reward function $R: \mathcal{S}\times\mathcal{A}\to[R_{\min}, R_{\max}]$, the discount factor $\gamma\in(0,1)$ and the initial state distribution $d_0:\mathcal{S}\to[0,1]$. Unless otherwise stated, $\pi$ denotes a stochastic policy $\pi:\mathcal{S}\times\mathcal{A}\to[0,1]$. The state-action value and state value functions of $\pi$ are defined as $Q^\pi(s,a)=\mathbb{E}_{s_{0}=s,a_0=a, a_{t} \sim \pi\left(\cdot \mid s_{t}\right), s_{t+1} \sim p\left(\cdot \mid s_{t}, a_{t}\right)}\left[\sum_{t=0}^{\infty} \gamma^{t} R\left(s_{t}, a_{t}\right)\right]$ and $V^\pi(s)=\mathbb{E}_{a\sim\pi(\cdot|s)}Q^\pi(s,a)$. 
The optimal policy is expected to maximize the accumulated return $J(\pi)=\mathbb{E}_{s\sim d_0}V^\pi(s)$. 

The Teacher-Student Framework~(TSF) models the shared control system as the combination of a teacher policy $\pi_t$ which is pretrained and fixed and a student policy $\pi_s$ to be learned.
The actual actions applied to the agent are deduced from a mixed policy of $\pi_t$ and $\pi_s$, where $\pi_t$ starts generating actions when intervention happens. The details of the intervention mechanism are described in Sec.~\ref{section:theoretical-analysis-different-forms-takeover}. The goal of TSF is to improve the training efficiency and safety of $\pi_s$ with the involvement of $\pi_t$. The discrepancy between $\pi_t$ and $\pi_s$ on state $s$, termed as policy discrepancy, is the $L_1$-norm of output difference: $\|\pi_t(\cdot|s)-\pi_s(\cdot|s)\|_1=\int_\mathcal{A}\left|\pi_t(a|s)-\pi_s(a|s)\right|\mathrm{d}a.$
We define the discounted state distribution under policy $\pi$ as $d_{\pi}(s)=(1-\gamma) \sum_{t=0}^{\infty} \gamma^{t} \operatorname{Pr}\left(s_{t}=s ; \pi, d_{0}\right)$, where $\operatorname{Pr}^{\pi}\left(s_{t}=s; \pi,d_0\right)$ is the state visitation probability. The state distribution discrepancy is defined as the difference in $L_1$-norm of the discounted state distributions deduced from two policies: $\|d_{\pi_t}-d_{\pi_s}\|_1=\int_\mathcal{S}\left|d_{\pi_t}(s)-d_{\pi_s}(s)\right|\mathrm{d}s$.

\section{Guarded Policy Optimization with Online Demonstrations}
%\subsection{Overview of Teacher-Student Shared Control}
%\label{section:overview-of-method}
% \bz{update the takeover function.}
% \begin{figure*}
\begin{wrapfigure}[15]{r}{0.5\textwidth}
    \centering
    \includegraphics[width=\linewidth]{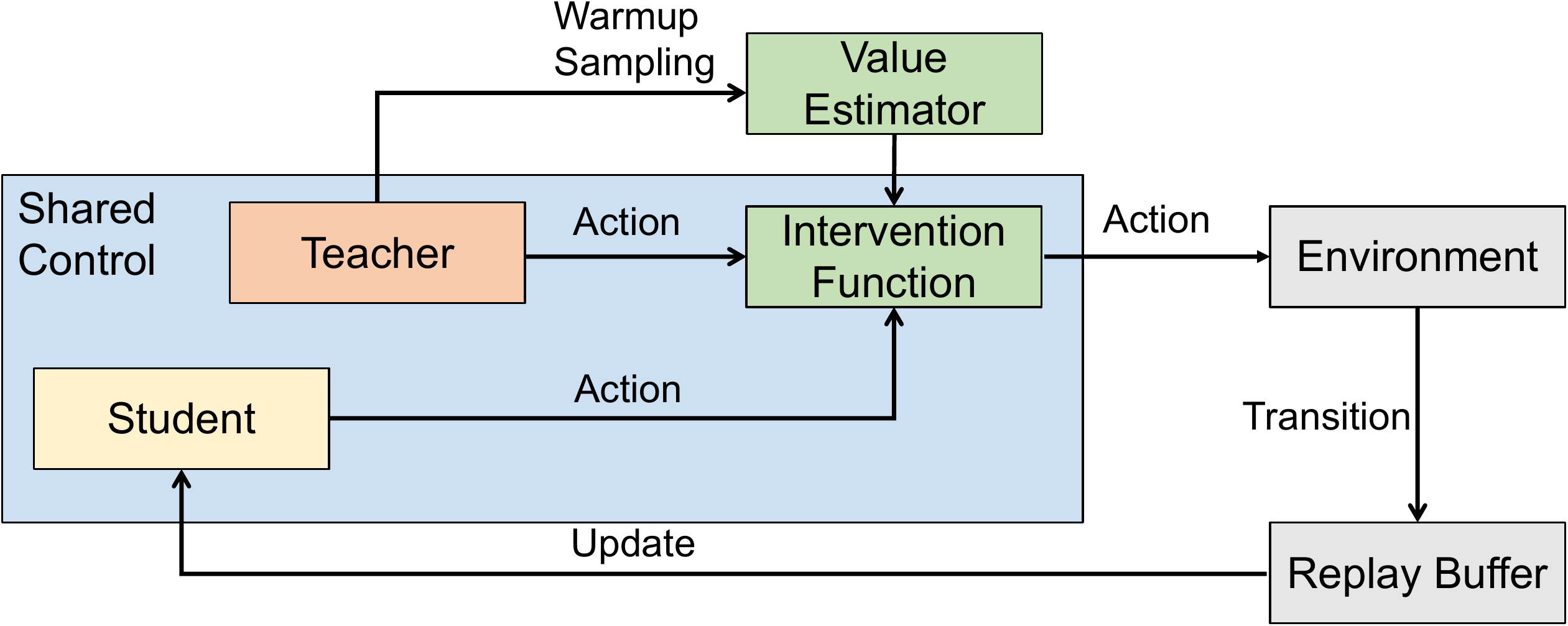}
    \caption{Overview of the proposed teacher-student shared control method. Both student and teacher policies are in the training loop and the shared control occurs based on the intervention function. 
    % \bz{Update the takeover function to intervention function in the figure}
    %\bz{update the `takeover' here}
    % \pzh{At least change the color scheme. Don't use the draw.io default color palette.}
    }
    \label{fig:general_idea}
\end{wrapfigure}
% \end{figure*}

% \pzh{Add a short summary of Bellman updates here?}
% \xzh{It seems to be too detailed in this 'Overview' section. I just removed the mentioning of Bellman updates and left it to the algorithm details section.}

Fig.~\ref{fig:general_idea} shows an overview of our proposed method. In addition to the conventional single-agent RL setting, we include a teacher policy $\pi_t$ in the training loop. The term ``teacher'' only indicates that the role of this policy is to help the student training. No assumption on the optimality of the teacher is needed. The teacher policy is first used to do warmup rollouts and train a value estimator. During the training of the student policy, both $\pi_s$ and $\pi_t$ receive current state $s$ from the environment. They propose actions $a_s$ and $a_t$, and then a value-based intervention function $\mathcal{T}(s)$ determines which action should be taken and applied to the environment. The student policy is then updated with data collected  through such intervention. 

We first give a theoretical analysis on the general setting of intervention-based RL in Sec.~\ref{section:theoretical-analysis}. We then discuss the properties of different forms of intervention function $\mathcal{T}$ in Sec.~\ref{section:theoretical-analysis-different-forms-takeover}. Based on these analyses, we propose a new algorithm for teacher-student shared control in Sec.~\ref{section:algo}. All the proofs in this section are included in Appendix~\ref{append:proof}.

\subsection{Analysis on Intervention-Based RL}
\label{section:theoretical-analysis}

In intervention-based RL, the teacher policy and the student policy act together and become a mixed behavior policy $\pi_b$. The intervention function $\mathcal{T}(s)$ determines which policy is in charge. Let $\mathcal{T}(s)=1$ denotes the teacher policy $\pi_t$ takes control and $\mathcal{T}(s)=0$ means otherwise. Then $\pi_b$ can be represented as
$\pi_b(\cdot|s)=\mathcal{T}(s)\pi_t(\cdot|s)+(1-\mathcal{T}(s))\pi_s(\cdot|s)$.

One issue with the joint control is that the student policy $\pi_s$ is trained with samples collected by the behavior policy $\pi_b$, whose action distribution is not always aligned with $\pi_s$. A large state distribution discrepancy between two policies $\left\|d_{\pi_b} - d_{\pi_s}\right\|_1$ can cause distributional shift and ruin the training. A similar problem exists in behavior cloning~(BC), though in BC no intervention is involved and $\pi_s$ learns from samples all collected by the teacher policy $\pi_t$. To analyze the state distribution discrepancy in BC, we first introduce a useful lemma 
~\citep{Achiam2017ConstrainedPO}.
\begin{lemma}
\label{lemma:1}
The state distribution discrepancy between the teacher policy $\pi_t$ and the student policy $\pi_s$ is bounded by their expected policy discrepancy:
\begin{equation}
\label{sdd_bc}
\left\|d_{\pi_t} - d_{\pi_s}\right\|_1\leqslant
\frac{\gamma}{1-\gamma} \mathbb{E}_{s \sim d_{\pi_t}} \left\|\pi_t(\cdot \mid s)- \pi_s(\cdot \mid s)\right\|_1.
\end{equation}
\end{lemma}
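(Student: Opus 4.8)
The plan is to reduce the claim to the linear algebra of discounted occupancy measures. Write the state-to-state transition operator induced by a policy $\pi$ as $P_\pi(s'\mid s)=\sum_{a}\pi(a\mid s)P(s'\mid s,a)$, and regard $d_\pi$ and $d_0$ as nonnegative unit-mass vectors on $\mathcal{S}$. Summing the defining series $d_\pi(s)=(1-\gamma)\sum_{t\ge 0}\gamma^{t}\Pr(s_t=s;\pi,d_0)$ gives the closed form $d_\pi=(1-\gamma)(I-\gamma P_\pi)^{-1}d_0$; the inverse exists and equals its Neumann series $\sum_{k\ge 0}\gamma^{k}P_\pi^{k}$ because $P_\pi$ is non-expansive in the $L_1$ norm used below.

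Next I would apply the resolvent identity $A^{-1}-B^{-1}=B^{-1}(B-A)A^{-1}$ with $A=I-\gamma P_{\pi_t}$ and $B=I-\gamma P_{\pi_s}$, so that $B-A=\gamma(P_{\pi_t}-P_{\pi_s})$. Multiplying by $(1-\gamma)d_0$ on the right and recognizing $(1-\gamma)(I-\gamma P_{\pi_t})^{-1}d_0=d_{\pi_t}$ yields the key identity
\[
d_{\pi_t}-d_{\pi_s}=\gamma\,(I-\gamma P_{\pi_s})^{-1}(P_{\pi_t}-P_{\pi_s})\,d_{\pi_t}.
\]
The order of factorization matters here: the other choice would leave $d_{\pi_s}$ on the right and would not match the expectation appearing in the statement, so picking this direction is the one real decision in the argument.

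Finally I would take $L_1$ norms on both sides and use two elementary contraction facts. First, each column of a stochastic matrix is a probability vector, so $\|P_\pi v\|_1\le\|v\|_1$, and hence $\|(I-\gamma P_{\pi_s})^{-1}\|_1\le\sum_{k\ge 0}\gamma^{k}=\tfrac{1}{1-\gamma}$ via the Neumann series. Second, expanding $(P_{\pi_t}-P_{\pi_s})d_{\pi_t}$ coordinatewise and applying the triangle inequality twice, together with $\sum_{s'}P(s'\mid s,a)=1$,
\[
\|(P_{\pi_t}-P_{\pi_s})d_{\pi_t}\|_1\le\sum_{s}d_{\pi_t}(s)\,\|\pi_t(\cdot\mid s)-\pi_s(\cdot\mid s)\|_1=\mathbb{E}_{s\sim d_{\pi_t}}\|\pi_t(\cdot\mid s)-\pi_s(\cdot\mid s)\|_1.
\]
Multiplying the two bounds gives precisely the factor $\tfrac{\gamma}{1-\gamma}$ and finishes the proof.

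The argument has no genuine obstacle; the only care needed is bookkeeping with operator norms, i.e. working consistently in the norm in which substochastic matrices are non-expansive. If one prefers to avoid matrix inverses entirely, an equivalent route is to unroll the two occupancy series $\sum_t\gamma^{t}\Pr(s_t=\cdot)$ simultaneously and telescope over $t$, bounding the per-step divergence of the state marginals by the policy discrepancy at each visited state; this yields the same constant but with a more involved induction.
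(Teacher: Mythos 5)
Your proof is correct. Note that the paper itself does not prove this lemma at all: both in the main text and in Appendix A.1 it is simply imported as Lemma 4.1 of Xu et al. (2020) (credited to Achiam et al., 2017 in the main text), so there is no in-paper argument to compare against step by step. What you have written is essentially the standard proof underlying that cited result: write $d_\pi=(1-\gamma)(I-\gamma P_\pi)^{-1}d_0$, apply the resolvent identity to get $d_{\pi_t}-d_{\pi_s}=\gamma (I-\gamma P_{\pi_s})^{-1}(P_{\pi_t}-P_{\pi_s})d_{\pi_t}$, and bound the two factors by $\tfrac{1}{1-\gamma}$ (Neumann series of a column-stochastic operator in the $\ell_1$-induced norm) and by $\mathbb{E}_{s\sim d_{\pi_t}}\lVert\pi_t(\cdot\mid s)-\pi_s(\cdot\mid s)\rVert_1$ (triangle inequality plus $\sum_{s'}P(s'\mid s,a)=1$). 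Your remark about the factorization order is the right one to make explicit, since it is what places the expectation under $d_{\pi_t}$ rather than $d_{\pi_s}$, matching the statement; the two constants multiply to exactly $\tfrac{\gamma}{1-\gamma}$. The only cosmetic caveat is that the paper states the MDP with finite $\mathcal{S},\mathcal{A}$ (while writing the norms as integrals), and your matrix formulation is stated for the finite case; it extends verbatim to the continuous setting by reading $P_\pi$ as a Markov kernel acting on densities, which you may want to say in one line if you include this proof.
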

\vspace{-0.2cm}
We apply the lemma to the setting of intervention-based RL and derive a bound for $\left\|d_{\pi_b}- d_{\pi_s}\right\|_1$.
\begin{theorem} 
\label{th:sdd_to}
For any behavior policy $\pi_b$ deduced by a teacher policy $\pi_t$, a student policy $\pi_s$ and an intervention function $\mathcal{T}(s)$, the state distribution discrepancy between $\pi_b$ and $\pi_s$ is bounded by
\begin{equation}
\label{sdd_to}
\left\|d_{\pi_b}- d_{\pi_s}\right\|_1\leqslant
\frac{\beta\gamma}{1-\gamma} \mathbb{E}_{s \sim d_{\pi_b}} \left\|\pi_t(\cdot \mid s)- \pi_s(\cdot \mid s)\right\|_1, 
\end{equation}
where $\beta=\frac{\mathbb{E}_{s \sim d_{\pi_{b}}}\left[\mathcal{T}(s)\left\|\pi_{t}(\cdot \mid s)-\pi_{s}(\cdot \mid s)\right\|_{1}\right]}{\mathbb{E}_{s \sim d_{\pi_{b}}}\left\|\pi_{t}(\cdot \mid s)-\pi_{s}(\cdot \mid s)\right\|_{1}}\in[0,1]$ is the expected intervention rate weighted by the policy discrepancy. 
% \bz{I have included the sentence in this theorem directly here. and remove the definition in the following theorem, please check if it fits.}
\end{theorem}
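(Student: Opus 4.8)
The plan is to invoke Lemma~\ref{lemma:1} not for the pair $(\pi_t,\pi_s)$ but for the pair $(\pi_b,\pi_s)$. The inequality in Lemma~\ref{lemma:1} is really a generic statement relating the state-distribution discrepancy of \emph{any} two stochastic policies to their expected policy discrepancy, so substituting $\pi_b$ in the role of $\pi_t$ is legitimate and gives immediately
\begin{equation*}
\left\|d_{\pi_b}-d_{\pi_s}\right\|_1 \leqslant \frac{\gamma}{1-\gamma}\,\mathbb{E}_{s\sim d_{\pi_b}}\left\|\pi_b(\cdot\mid s)-\pi_s(\cdot\mid s)\right\|_1 .
\end{equation*}
Thus the whole argument reduces to rewriting the per-state quantity $\|\pi_b(\cdot\mid s)-\pi_s(\cdot\mid s)\|_1$ in terms of $\|\pi_t(\cdot\mid s)-\pi_s(\cdot\mid s)\|_1$ and $\mathcal{T}(s)$.

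Next I would substitute the definition $\pi_b(\cdot\mid s)=\mathcal{T}(s)\pi_t(\cdot\mid s)+(1-\mathcal{T}(s))\pi_s(\cdot\mid s)$ and observe that the $\pi_s$ contributions partially cancel, leaving
\begin{equation*}
\pi_b(\cdot\mid s)-\pi_s(\cdot\mid s)=\mathcal{T}(s)\bigl(\pi_t(\cdot\mid s)-\pi_s(\cdot\mid s)\bigr).
\end{equation*}
Taking the $L_1$-norm pointwise in $s$ and using $0\leqslant\mathcal{T}(s)\leqslant 1$ (so that $\mathcal{T}(s)$ factors out of the norm as a nonnegative scalar) yields $\|\pi_b(\cdot\mid s)-\pi_s(\cdot\mid s)\|_1=\mathcal{T}(s)\,\|\pi_t(\cdot\mid s)-\pi_s(\cdot\mid s)\|_1$. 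Plugging this into the bound above turns the right-hand side into $\frac{\gamma}{1-\gamma}\,\mathbb{E}_{s\sim d_{\pi_b}}\!\left[\mathcal{T}(s)\|\pi_t(\cdot\mid s)-\pi_s(\cdot\mid s)\|_1\right]$.

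Finally I would recognize the numerator of $\beta$ inside that expectation: by the definition of $\beta$ in the statement, $\mathbb{E}_{s\sim d_{\pi_b}}[\mathcal{T}(s)\|\pi_t(\cdot\mid s)-\pi_s(\cdot\mid s)\|_1]=\beta\,\mathbb{E}_{s\sim d_{\pi_b}}\|\pi_t(\cdot\mid s)-\pi_s(\cdot\mid s)\|_1$, which is exactly \eqref{sdd_to}. The bound $\beta\in[0,1]$ is then immediate, since $0\leqslant\mathcal{T}(s)\leqslant 1$ forces the weighted numerator to lie between $0$ and the nonnegative denominator; the degenerate case of a zero denominator needs a one-line remark, because then $\|\pi_t(\cdot\mid s)-\pi_s(\cdot\mid s)\|_1=0$ for $d_{\pi_b}$-almost every $s$ and both sides of \eqref{sdd_to} vanish. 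There is essentially no hard step here; the only thing to be careful about is stating explicitly that Lemma~\ref{lemma:1} applies to arbitrary policy pairs, and that the nonnegativity of $\mathcal{T}(s)$ is what lets it be pulled cleanly out of the $L_1$-norm.
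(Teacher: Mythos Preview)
Your proposal is correct and follows essentially the same route as the paper's proof: apply Lemma~\ref{lemma:1} to the pair $(\pi_b,\pi_s)$, substitute the mixture definition of $\pi_b$, simplify to $\mathcal{T}(s)\|\pi_t(\cdot\mid s)-\pi_s(\cdot\mid s)\|_1$, and absorb the $\mathcal{T}(s)$ factor into $\beta$ via its definition. Your additional remarks on the nonnegativity of $\mathcal{T}(s)$ and the degenerate zero-denominator case are more explicit than the paper's version but do not change the argument.
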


Both Eq.~\ref{sdd_bc} and Eq.~\ref{sdd_to} bound the state distribution discrepancy by the difference in per-state policy distributions, but the upper bound with intervention is squeezed by the intervention rate $\beta$. In practical algorithms, $\beta$ can be minimized to reduce the state distribution discrepancy and thus relieve the performance drop during test time. Based on Thm.~\ref{th:sdd_to}, we further prove in Appendix~\ref{append:proof} that under the setting of intervention-based RL, the accumulated returns of behavior policy $J(\pi_b)$ and student policy $J(\pi_s)$ can be similarly related.
% Following above lemmas, we can deduce the suboptimality, \pzh{give a definition, check my statement: the performance gap of the student policy to the optimal policy}, of the student policy and bounded by XXXX:
% \begin{theorem}
% \label{th:perf_gap}
% For any behavior policy $\pi_b$ consisting of a teacher policy $\pi_t$, a student policy $\pi_s$ and an intervention function $\mathcal{T}(s)$, the suboptimality of the student policy is bounded by
% \begin{equation}
% \label{eq:optimal_gap}
% \begin{aligned}
%     \left|J\left(\pi^{*}\right)-J\left(\pi_{s}\right)\right| \leqslant& \frac{\beta R_{\max}}{(1-\gamma)^{2}} \mathbb{E}_{s\sim \pi_{b}}\left\|\pi_t(\cdot \mid s)- \pi_s(\cdot \mid s)\right\|_1   +\left|J\left(\pi^{*}\right)-J\left(\pi_{b}\right)\right|,
% \end{aligned}
% \end{equation}
% where $R_{\max}=\max\limits_{s,a}r(s,a)$ and $\beta$ is defined the same as in Thm.~\ref{th:sdd_to}.
% %where $\beta=\mathbb{E}_{s \sim d_{\pi_b}}\mathcal{T}(s)$ is the expected takeover rate.\bz{I comment out it because we already define it previously?}
% \end{theorem}
% Therefore, training with the trajectory collected with mixed policy $\pi_b$ is to optimize an upper bound of the student's suboptimality.
The analysis in this section does not assume a certain form of the intervention function $\mathcal{T}(s)$. Our analysis provides the insight on the feasibility and efficiency of all previous algorithms in intervention-based RL~\citep{kelly2019hg,peng2021safe,chisari2021correct}. 
In the following section, we will examine different forms of intervention functions and investigate their properties and performance bounds, especially with imperfect online demonstrations.

\subsection{Learning from Imperfect Demonstrations}
\label{section:theoretical-analysis-different-forms-takeover}
% \paragraph{Random intervention function} \xzh{Addable if we can not reach 8 pages.}
% The most primitive form of the intervention function is to trigger intervention randomly~\citep{ross2011reduction}:
% \begin{equation}
% \label{eq:dagger_to}
% \mathcal{T}_{\text{random}}(s)=
% \begin{dcases*}
% 1
%   &  if~~$\text{random()}
%   <\varepsilon$, \\[1ex]
%   0
%   & otherwise,
% \end{dcases*}
% \end{equation}
A straightforward idea to design the intervention function is to intervene when the student acts differently from the teacher. 
% This idea resembles imitation learning~(IL) methods where student action is optimized based on behavior cloning loss against teacher action at each step.
% it is natural to assume the teacher policy $\pi_t$ to be optimal and we want to imitate it by the student policy $\pi_s$.
% The process of IL can be considered as the intervention-based shared control where intervention will occur when the student acts differently, equivalent to the case when student actions have large behavior cloning loss against the teacher's actions. At this time, teacher's demonstration should be provided.
We model such process with the action-based intervention function $\mathcal{T}_{\text{action}}(s)$:
%\pzh{We use $\pi_{b \slash s \slash t}$ in previous section, but here is $\pi^t$ in the superscript. }\xzh{Oh I made some mistakes.}
\begin{equation}
\label{act_to}
\mathcal{T}_{\text{action}}(s)=
\begin{dcases*}
1
   &  if~~$\mathbb{E}_{a \sim \pi_{t}\left(\left.\cdot\right|s\right)} [\log\pi_{s}(a \mid s)
   ]
   <\varepsilon$, \\[1ex]
   0
   & otherwise,
\end{dcases*}
\end{equation}
wherein $\varepsilon>0$ is a predefined parameter. A similar intervention function is used in EGPO~\citep{peng2021safe}, where the student's action is replaced by the teacher's if the student's action has low probability under the teacher's policy distribution.  
To measure the effectiveness of a certain form of intervention function,  we examine the return of the behavior policy $J(\pi_b)$. 
% Since the behavior policy $\pi_b$ is used to interact with the environment and draw samples to train the student, larger $J(\pi_b)$ means training data with higher rewards and more efficient exploration. 
With $\mathcal{T}_\text{action}(s)$ defined in Eq.~\ref{act_to} we can bound $J(\pi_b)$ with the following theorem.
\begin{theorem}
\label{act_explore}
With the action-based intervention function $\mathcal{T}_\text{action}(s)$, the return of the behavior policy $J(\pi_b)$ is lower and upper bounded by
\begin{equation}
\label{eq:act_explore}
\begin{aligned}
J(\pi_t)+&\frac{\sqrt{2}(1-\beta) R_{\max }}{(1-\gamma)^{2}} \sqrt{H-\varepsilon}\geqslant J(\pi_b)\geqslant J(\pi_t)-\frac{\sqrt{2}(1-\beta) R_{\max }}{(1-\gamma)^{2}} \sqrt{H-\varepsilon},
\end{aligned}
\end{equation}
where $H=\mathbb{E}_{s\sim d_{\pi_b}} \mathcal{H}(\pi_t(\cdot|s))$ is the average entropy of the teacher policy during shared control and $\beta$ is the weighted intervention rate in Thm.~\ref{th:sdd_to}.
% \pzh{Why the expectation is on $d_{\pi_b}$? Should use ``the entropy of the teacher policy \textbf{during shared control}''}. \xzh{Exactly. The presence of $d_{\pi_b}$ is similar to Eq.(6).}
\end{theorem}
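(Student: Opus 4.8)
The plan is to bound $J(\pi_b)$ above and below by $J(\pi_t)$ plus or minus an error term, via a performance-difference argument, and then to convert that error term into the entropy quantity $H$ using the defining inequality of $\mathcal{T}_{\text{action}}$. First I would start from the performance difference identity $J(\pi_b)-J(\pi_t)=\tfrac{1}{1-\gamma}\mathbb{E}_{s\sim d_{\pi_b}}\mathbb{E}_{a\sim\pi_b(\cdot\mid s)}[A^{\pi_t}(s,a)]$, and combine $\mathbb{E}_{a\sim\pi_t(\cdot\mid s)}[A^{\pi_t}(s,a)]=0$ with the crude bound $|A^{\pi_t}(s,a)|\le R_{\max}/(1-\gamma)$ to get
\[
|J(\pi_b)-J(\pi_t)|\ \le\ \frac{R_{\max}}{(1-\gamma)^2}\;\mathbb{E}_{s\sim d_{\pi_b}}\,\|\pi_b(\cdot\mid s)-\pi_t(\cdot\mid s)\|_1 .
\]
Since $\pi_b(\cdot\mid s)=\mathcal{T}(s)\pi_t(\cdot\mid s)+(1-\mathcal{T}(s))\pi_s(\cdot\mid s)$, we have $\pi_b(\cdot\mid s)-\pi_t(\cdot\mid s)=(1-\mathcal{T}(s))\bigl(\pi_s(\cdot\mid s)-\pi_t(\cdot\mid s)\bigr)$, so the right-hand side equals $\tfrac{R_{\max}}{(1-\gamma)^2}\,\mathbb{E}_{s\sim d_{\pi_b}}[(1-\mathcal{T}(s))\|\pi_t(\cdot\mid s)-\pi_s(\cdot\mid s)\|_1]=\tfrac{(1-\beta)R_{\max}}{(1-\gamma)^2}\,\mathbb{E}_{s\sim d_{\pi_b}}\|\pi_t(\cdot\mid s)-\pi_s(\cdot\mid s)\|_1$ by the definition of $\beta$ in Thm.~\ref{th:sdd_to}. (One can alternatively route through the $J(\pi_b)$--$J(\pi_s)$ companion of Thm.~\ref{th:sdd_to} referred to in the text; either way the $(1-\beta)$ prefactor comes out.)

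Next I would trade the remaining expected policy discrepancy for the teacher's entropy. On any state with $\mathcal{T}_{\text{action}}(s)=0$, Eq.~\ref{act_to} gives $\mathbb{E}_{a\sim\pi_t(\cdot\mid s)}[\log\pi_s(a\mid s)]\ge\varepsilon$; substituting this into the identity $\mathrm{KL}\bigl(\pi_t(\cdot\mid s)\,\|\,\pi_s(\cdot\mid s)\bigr)=-\mathcal{H}(\pi_t(\cdot\mid s))-\mathbb{E}_{a\sim\pi_t(\cdot\mid s)}[\log\pi_s(a\mid s)]$ bounds the per-state KL divergence by a quantity depending only on $\mathcal{H}(\pi_t(\cdot\mid s))$ and $\varepsilon$. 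Pinsker's inequality $\|p-q\|_1\le\sqrt{2\,\mathrm{KL}(p\|q)}$ then controls $\|\pi_t(\cdot\mid s)-\pi_s(\cdot\mid s)\|_1$ on the non-intervention set by $\sqrt{2}$ times the square root of that quantity, and Jensen's inequality for the concave map $x\mapsto\sqrt{x}$ (after applying $\mathbb{E}_{s\sim d_{\pi_b}}[\cdot]$) replaces the per-state entropy by its average $H=\mathbb{E}_{s\sim d_{\pi_b}}\mathcal{H}(\pi_t(\cdot\mid s))$, giving $\mathbb{E}_{s\sim d_{\pi_b}}\|\pi_t(\cdot\mid s)-\pi_s(\cdot\mid s)\|_1\le\sqrt{2}\sqrt{H-\varepsilon}$. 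Plugging this into the first displayed bound yields $|J(\pi_b)-J(\pi_t)|\le\tfrac{\sqrt{2}(1-\beta)R_{\max}}{(1-\gamma)^2}\sqrt{H-\varepsilon}$, i.e.\ exactly the two-sided estimate in Eq.~\ref{eq:act_explore}.

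The step I expect to be the main obstacle is reconciling the $(1-\beta)$ weighting with the fact that the threshold $\mathbb{E}_{a\sim\pi_t}[\log\pi_s(a\mid s)]\ge\varepsilon$ holds only on states with $\mathcal{T}_{\text{action}}(s)=0$: one has to carry the $(1-\mathcal{T}(s))$ indicator through the Pinsker step, use $(1-\mathcal{T}(s))^2=1-\mathcal{T}(s)$ together with Cauchy--Schwarz (or Jensen), and then check that the intervention-rate factor that appears is dominated by, or can be rewritten as, $1-\beta$; this bookkeeping is the only genuinely delicate part. A secondary chore is pinning down the constants -- verifying that $|A^{\pi_t}|\le R_{\max}/(1-\gamma)$ is used consistently with the reward range $[R_{\min},R_{\max}]$, and that the standing assumptions on $\varepsilon$ force $H-\varepsilon\ge0$ so that all the square roots are well defined.
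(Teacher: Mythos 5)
Your proposal is correct and follows essentially the same route as the paper's own proof: the paper invokes the bound $|J(\pi_b)-J(\pi_t)|\leqslant\frac{R_{\max}}{(1-\gamma)^2}\mathbb{E}_{s\sim d_{\pi_b}}\|\pi_b(\cdot\mid s)-\pi_t(\cdot\mid s)\|_1$ (cited from Lemmas 4.2--4.3 of \citet{xu2020error} rather than re-derived via the performance-difference lemma), extracts the $(1-\beta)$ factor from $\pi_b-\pi_t=(1-\mathcal{T})(\pi_s-\pi_t)$ exactly as you do, and then applies the same Pinsker--threshold--Jensen chain to reach $\sqrt{2}\sqrt{H-\varepsilon}$. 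The only remark worth making is that the delicate points you flag are handled loosely in the paper itself: its proof applies the threshold inequality to the expectation over all of $d_{\pi_b}$ (not just the non-intervened states) and implicitly uses the convention $\mathcal{H}(\pi_t(\cdot\mid s))=\mathbb{E}_{a\sim\pi_t}[\log\pi_t(a\mid s)]$, so your correctly signed KL identity and your bookkeeping of the $(1-\mathcal{T}(s))$ indicator are, if anything, more careful than the published argument.
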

The theorem shows that $J(\pi_b)$ can be lower bounded by the return of the teacher policy $\pi_t$ and an extra term relating to the entropy of the teacher policy. It implies that action-based intervention function $\mathcal{T}_\text{action}$ is indeed helpful in providing training data with high return. We discuss the tightness of Thm.~\ref{act_explore} and give an intuitive interpretation of $\sqrt{H-\varepsilon}$ in Appendix~\ref{append:discussion}.

\begin{wrapfigure}[14]{r}{0.55\textwidth}
    \centering
    \includegraphics[width=\linewidth]{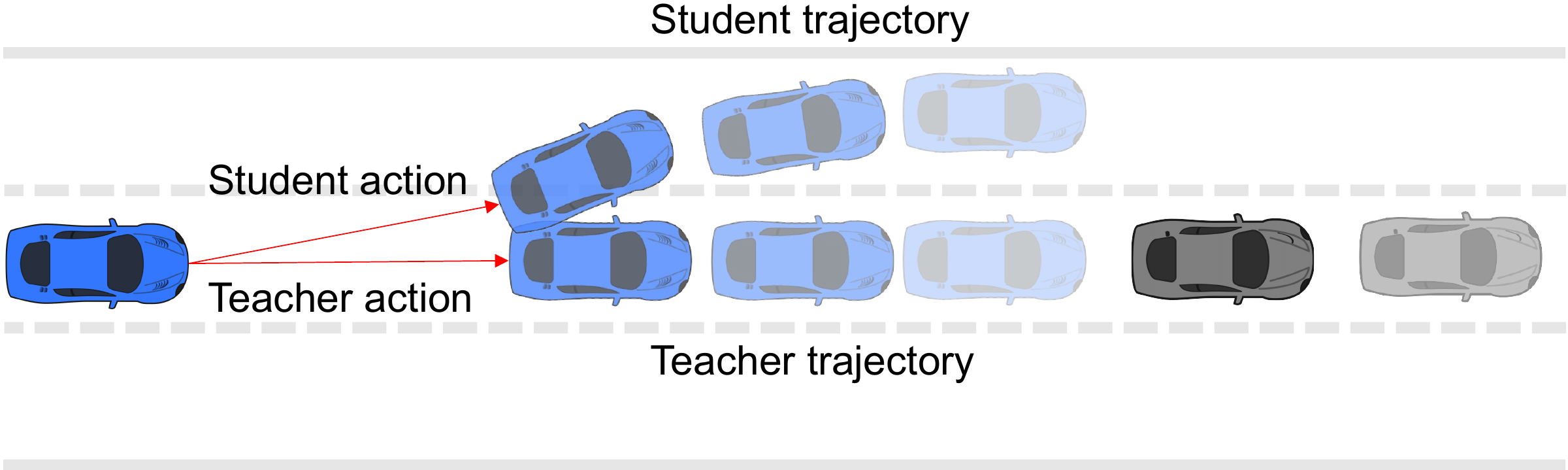}
    % \vspace{-0.2cm}
    \caption{
    % Taking autonomous driving task as an example, we illustrate how teacher and student propose different actions which lead to different future trajectories. 
    In an autonomous driving scenario, the ego vehicle is the blue one on the left, following the gray vehicle on the right. The upper trajectory is proposed by the student to overtake and the lower trajectory is proposed by the teacher to keep following. }
    % \bz{In this figure you can directly include $a_s$: student action and $a_t$: teacher action inside, no need to put them below.  } }
    \label{fig:different_takeover}
\end{wrapfigure}
A drawback of the action-based intervention function is the strong assumption on the optimal teacher, which is not always feasible. If we turn to employ a suboptimal teacher, the behavior policy would be burdened due to the upper bound in Eq.~\ref{eq:act_explore}. 
We illustrate this phenomenon with the example in Fig.~\ref{fig:different_takeover} where a slow vehicle in gray is driving in front of the ego-vehicle in blue. The student policy is aggressive and would like to overtake the gray vehicle to reach the destination faster, while the teacher intends to follow the vehicle conservatively. 
Therefore, $\pi_s$ and $\pi_b$ will propose different actions in the current state, leading to $\mathcal{T}_\text{action}=1$ according to Eq.~\ref{act_to}. The mixed policy with shared control will always choose to follow the front vehicle and the agent can never accomplish a successful overtake.

% \pzh{Revise this paragraph after the Figure~\ref{fig:different_takeover} is finished.}
% \xzh{OK}
 
To empower the student to outperform a suboptimal teacher policy, we investigate a new form of intervention function that encapsulates the long-term value estimation into the decision of intervention, designed as follows:
\begin{equation}
\label{v_to}
\mathcal{T}_\text{value}(s)=
\begin{dcases*}
1
   &  if~~$V^{\pi_t}\left(s\right)-\mathbb{E}_{a\sim\pi_s(\cdot|s)}Q^{\pi_t}\left(s, a\right)>\varepsilon$, \\[1ex]
   0
   & otherwise,
\end{dcases*}
\end{equation}
where $\varepsilon>0$ is a predefined parameter. By using this intervention function, the teacher tolerates student's action if the teacher can not perform significantly better than the student by $\epsilon$ in return.
$\mathcal{T}_\text{value}$ no longer expects the student to imitate the teacher policy step-by-step. Instead, it makes decision on the basis of long-term return. 
% \pzh{The term $Q^{t}$ is never used later. Instead, in Alg. 1, we use the term ``intervention critic'' $Q^\psi$, which is misaligned with here.}
Taking trajectories in Fig.~\ref{fig:different_takeover} again as an example, if the overtake behavior has high return, the student will be preferable to $\mathcal{T}_\text{value}$. Then the student control will not be intervened by the conservative teacher. 
% \pzh{Why? Justify: In fact, if the student has $\pi_s=\pi^*$, $\mathbb{E}_{a\sim\pi_s(\cdot|s)}Q^{\pi_t}\left(s, a\right)>V^{\pi_t}\left(s\right)$ 
% $\mathcal{T}_\text{value}$ will always be zero because $\mathbb E_{a\sim \pi^*} Q^t(s, a) \ge V^t(s)$.} In this case $J(\pi_b)=J(\pi^*)$ is agnostic to the performance of the teacher policy, and $J(\pi_b)$ in infinite horizon does not even have a upper bound. 
So with the value-based intervention function, the agent's exploration ability will not be limited by a suboptimal teacher.  Nevertheless, the lower-bound performance guarantee of the behavior policy $\pi_b$ still holds, shown as follows.
\begin{theorem}
\label{th:v_to_return}
With the value-based intervention function $\mathcal{T}_\text{value}(s)$ defined in Eq.~\ref{v_to}, the return of the behavior policy $\pi_b$ is lower bounded by
\begin{equation}
\label{eq:v_to_return}
J\left(\pi_{b}\right) \geqslant J\left(\pi_{t}\right)-\frac{(1-\beta)\varepsilon}{1-\gamma}.
\end{equation}
\end{theorem}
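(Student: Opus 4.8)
The plan is to mimic the structure used for Theorem~\ref{act_explore}, but replace the entropy/Pinsker-type estimate with the direct definition of $\mathcal{T}_\text{value}$. First I would express the return gap $J(\pi_t)-J(\pi_b)$ as an integral over the discounted state distribution of $\pi_b$ of a per-state advantage term. The standard performance-difference identity gives $J(\pi_b)-J(\pi_t) = \frac{1}{1-\gamma}\mathbb{E}_{s\sim d_{\pi_b}}\mathbb{E}_{a\sim\pi_b(\cdot|s)}\big[A^{\pi_t}(s,a)\big]$, where $A^{\pi_t}(s,a)=Q^{\pi_t}(s,a)-V^{\pi_t}(s)$. Splitting $\pi_b$ according to $\mathcal{T}(s)$: on states with $\mathcal{T}(s)=1$ the agent follows $\pi_t$, so the inner expectation of $A^{\pi_t}$ is exactly $0$; on states with $\mathcal{T}(s)=0$ the agent follows $\pi_s$, and by the definition of $\mathcal{T}_\text{value}$ in Eq.~\ref{v_to} these are precisely the states where $V^{\pi_t}(s)-\mathbb{E}_{a\sim\pi_s(\cdot|s)}Q^{\pi_t}(s,a)\leqslant\varepsilon$, i.e.\ $\mathbb{E}_{a\sim\pi_s(\cdot|s)}A^{\pi_t}(s,a)\geqslant-\varepsilon$.

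Therefore the only negative contribution comes from the non-intervened states, each contributing at least $-\varepsilon$, so $J(\pi_b)-J(\pi_t)\geqslant -\frac{\varepsilon}{1-\gamma}\Pr_{s\sim d_{\pi_b}}(\mathcal{T}(s)=0)$. The remaining task is to identify this non-intervention probability with $(1-\beta)$. Here I would appeal to the definition of $\beta$ from Thm.~\ref{th:sdd_to}: $\beta$ is the policy-discrepancy-weighted intervention rate, and symmetrically $1-\beta=\frac{\mathbb{E}_{s\sim d_{\pi_b}}[(1-\mathcal{T}(s))\|\pi_t(\cdot|s)-\pi_s(\cdot|s)\|_1]}{\mathbb{E}_{s\sim d_{\pi_b}}\|\pi_t(\cdot|s)-\pi_s(\cdot|s)\|_1}$ is the weighted \emph{non-}intervention rate. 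To make the bound come out with this weighting rather than the raw probability, I would refine the per-state estimate: on a non-intervened state, the advantage deficit $\mathbb{E}_{a\sim\pi_s}A^{\pi_t}(s,a)$ can itself be controlled by the local policy discrepancy, since $|\mathbb{E}_{a\sim\pi_s}A^{\pi_t}(s,a)| = |\mathbb{E}_{a\sim\pi_s}A^{\pi_t}(s,a)-\mathbb{E}_{a\sim\pi_t}A^{\pi_t}(s,a)|\leqslant \frac{R_{\max}-R_{\min}}{1-\gamma}\|\pi_t(\cdot|s)-\pi_s(\cdot|s)\|_1$ (using $\mathbb{E}_{a\sim\pi_t}A^{\pi_t}=0$ and a bound on $\|A^{\pi_t}\|_\infty$), and then combine this with the crude bound $-\varepsilon$ to recover exactly the $\beta$-weighted form. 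I would aggregate via the identity $\mathbb{E}_{s\sim d_{\pi_b}}[(1-\mathcal{T}(s))(\cdot)]$ and factor out the normalizing denominator appearing in $\beta$.

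The main obstacle I anticipate is getting the $(1-\beta)$ factor to appear cleanly: the naive argument only yields a factor equal to the unweighted non-intervention probability, and bridging to the policy-discrepancy-weighted quantity $1-\beta$ requires the extra step of bounding the per-state advantage by the per-state policy discrepancy and then recognizing the resulting ratio. Care is needed because $\varepsilon$ and the discrepancy-based bound must be combined in the right direction (taking the min of the two per-state upper bounds on the deficit, then summing), and one must check that the algebra collapses to precisely $\frac{(1-\beta)\varepsilon}{1-\gamma}$ rather than a looser constant. A secondary point to verify is the sign conventions in the performance-difference lemma and that $d_{\pi_b}$ (not $d_{\pi_s}$ or $d_{\pi_t}$) is the correct measure throughout, which is what makes the split by $\mathcal{T}(s)$ exact. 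Once those bookkeeping issues are handled, the inequality in Eq.~\ref{eq:v_to_return} follows directly.
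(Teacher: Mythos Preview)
Your approach is exactly the paper's: invoke the performance-difference lemma $J(\pi_b)-J(\pi_t)=\mathbb{E}_{\tau\sim\pi_b}\big[\sum_n \gamma^n A^{\pi_t}(s_n,a_n)\big]$, split $\pi_b$ according to $\mathcal{T}(s)$, observe that the $\pi_t$-branch contributes zero expected advantage, and on non-intervened states use the defining inequality of $\mathcal{T}_{\text{value}}$ to get $\mathbb{E}_{a\sim\pi_s}A^{\pi_t}(s,a)\geqslant-\varepsilon$. That is the entire content of the paper's argument.

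Where you diverge is in worrying about the $(1-\beta)$ factor, and your instinct is correct that something is off. The paper simply writes
\[
\mathbb{E}_{s_n\sim\tau_{\pi_b}}\Big[\sum_n \gamma^n (1-\mathcal{T}(s_n))\big[\mathbb{E}_{a\sim\pi_s}Q^{\pi_t}(s_n,a)-V^{\pi_t}(s_n)\big]\Big]
=(1-\beta)\,\mathbb{E}_{s_n\sim\tau_{\pi_b}}\Big[\sum_n \gamma^n\big[\mathbb{E}_{a\sim\pi_s}Q^{\pi_t}(s_n,a)-V^{\pi_t}(s_n)\big]\Big]
\]
as an equality and moves on. With $\beta$ formally defined as the \emph{policy-discrepancy}-weighted intervention rate, this factorization is not an identity; the paper is tacitly treating $1-\beta$ as a generic non-intervention rate (indeed the appendix restatement of the theorem even drops the $(1-\beta)$ from the statement). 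Your attempted repair---bounding the per-state advantage by $\|\pi_t-\pi_s\|_1$ to force the discrepancy weighting to appear---would introduce an extra $R_{\max}/(1-\gamma)$ factor and cannot collapse back to $\frac{(1-\beta)\varepsilon}{1-\gamma}$, so do not pursue it. The rigorous version of what both you and the paper actually prove is
\[
J(\pi_b)\;\geqslant\; J(\pi_t)-\frac{\varepsilon}{1-\gamma}\,\mathbb{E}_{s\sim d_{\pi_b}}\big[1-\mathcal{T}(s)\big],
\]
with the unweighted non-intervention probability; the $(1-\beta)$ in the stated bound should be read as shorthand for that quantity rather than the discrepancy-weighted $\beta$ of Thm.~\ref{th:sdd_to}.
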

% To prove this theorem, the key lemma we use is the policy difference lemma in~\citep{schulman2015trust}. It introduces one policy's advantage function computed on states and actions sampled from another policy's generated trajectory. This advantage function turns out to be a perfect fit for the value-based intervention function $\mathcal{T}_\text{value}$ in Eq.~\ref{v_to}.
In safety-critical scenarios, the step-wise training cost $c(s,a)$, $i.e.,$ the penalty on the safety violation during training, can be regarded as a negative reward. We define $\hat{r}(s,a)=r(s,a)-\eta c(s,a)$ as the combined reward, where $\eta$ is the weighting hyperparameter. $\hat{V}, \hat{Q}$ and $\hat{\mathcal{T}}_{\text{value}}$ are similarly defined by substituting $r$ with $\hat{r}$ in the original definition. Then we have the following corollary related to expected cumulative training cost, defined by $C(\pi)=\mathbb{E}_{s_{0}\sim d_0,a_{t} \sim \pi\left(\cdot \mid s_{t}\right), s_{t+1} \sim p\left(\cdot \mid s_{t}, a_{t}\right)}\left[\sum_{t=0}^{\infty} \gamma^{t} c\left(s_{t}, a_{t}\right)\right]$.

\begin{corollary}
\label{cor:1}
With safety-critical value-based intervention function $\hat{\mathcal{T}}_{\text{value}}(s)$, the expected cumulative training cost of the behavior policy $\pi_b$ is upper bounded by
\begin{equation}
\label{eq:cost_upper}
    C(\pi_b)\leqslant C(\pi_t)+\frac{(1-\beta)\epsilon}{\eta(1-\gamma)}+\frac{1}{\eta}\left[J(\pi_b)-J(\pi_t)\right].
\end{equation}
\end{corollary}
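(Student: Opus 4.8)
The plan is to recognize Corollary~\ref{cor:1} as Theorem~\ref{th:v_to_return} transported to the ``combined-reward'' MDP, namely the MDP in which the reward $r$ is replaced by $\hat r(s,a)=r(s,a)-\eta c(s,a)$. Since $c$ is bounded and $\eta$ is a fixed constant, $\hat r$ is again bounded, so this object is a legitimate MDP of the kind analysed in Section~\ref{section:theoretical-analysis}, and every earlier result applies to it verbatim with all quantities replaced by their hatted counterparts; in particular $\hat V^{\pi_t}$, $\hat Q^{\pi_t}$ and $\hat{\mathcal{T}}_{\text{value}}$ are exactly the objects appearing in Theorem~\ref{th:v_to_return} for this MDP.

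The first step is to use the linearity of the return functional in the reward: because $\hat r = r - \eta c$ and the discounted sums converge absolutely, $\hat J(\pi) := \mathbb{E}_{s\sim d_0}\hat V^{\pi}(s) = J(\pi) - \eta\,C(\pi)$ for every policy $\pi$. I would also observe that the discounted state distribution $d_{\pi_b}$ does not depend on the reward, so the weighted intervention rate $\beta$ induced by $\hat{\mathcal{T}}_{\text{value}}$ is well-defined exactly as in Theorem~\ref{th:sdd_to}. Then I would apply Theorem~\ref{th:v_to_return} to the combined-reward MDP to obtain $\hat J(\pi_b)\geq \hat J(\pi_t)-\frac{(1-\beta)\varepsilon}{1-\gamma}$, substitute the decomposition $\hat J(\pi)=J(\pi)-\eta C(\pi)$ on both sides, divide by $\eta>0$, and rearrange. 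This produces $C(\pi_b)\leq C(\pi_t)+\frac{(1-\beta)\epsilon}{\eta(1-\gamma)}+\frac{1}{\eta}[J(\pi_b)-J(\pi_t)]$ directly.

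There is no genuinely hard step here; the whole argument is a reduction plus one line of algebra. The only care needed is bookkeeping — tracking which quantities are hatted, and pointing out that Theorem~\ref{th:v_to_return} is proved for a generic bounded-reward MDP so it may be re-instantiated for $\hat r$ without altering its proof. One small point I would state explicitly is that the $\beta$ appearing in the corollary is the intervention rate of $\hat{\mathcal{T}}_{\text{value}}$ (which in general differs from that of $\mathcal{T}_{\text{value}}$, since the two intervention functions are defined through different value functions), which is consistent with how $\beta$ enters the conclusion of Theorem~\ref{th:v_to_return}.
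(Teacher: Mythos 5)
Your proposal is correct and is essentially the paper's own argument: instantiate Theorem~\ref{th:v_to_return} in the combined-reward MDP with $\hat r = r-\eta c$, use the linearity identity $\hat J(\pi)=J(\pi)-\eta C(\pi)$, and rearrange after dividing by $\eta$. You even use the sign convention of the main text ($\hat J = J - \eta C$), which is the one that actually yields the stated upper bound — the appendix's displayed $\hat J(\pi)=J(\pi)+\eta C(\pi)$ is a sign typo — so no gap remains.
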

In Eq.~\ref{eq:cost_upper} the upper bound of behavior policy's training cost consists of three terms: the cost of teacher policy, the threshold in intervention $\epsilon$ multiplied by coefficients and the superiority of $\pi_b$ over $\pi_t$ in cumulative reward. The first two terms are similar to those in Eq.~\ref{eq:v_to_return} and the third term means a trade-off between training safety and efficient exploration, which can be adjusted by hyperparameter $\eta$.

Comparing the lower bound performance guarantee of action-based and value-based intervention function (Eq.~\ref{eq:act_explore} and Eq.~\ref{eq:v_to_return}), the performance gap between $\pi_b$ and $\pi_t$ can both be bounded with respect to the threshold for intervention $\varepsilon$ and the discount factor $\gamma$. The difference is that the performance gap when using $\mathcal{T}_\text{action}$ is in an order of $O(\frac{1}{(1-\gamma)^{2}})$ while the gap with $\mathcal{T}_\text{value}$ is in an order of $O(\frac{1}{1-\gamma})$. It implies that in theory value-based intervention leads to better lower-bound performance guarantee. 
In terms of training safety guarantee,
value-based intervention function $\mathcal{T}_{\text{value}}$ has better safety guarantee by providing a tighter safety bound with the order of $O(\frac{1}{1-\gamma})$, in contrast to $O(\frac{1}{(1-\gamma)^{2}})$ of action-based intervention function~(see Theorem 1 in~\citep{peng2021safe}). 
% Regarding the training safety guarantee, with action-based intervention function the aforementioned teacher-student gap is also in an order of $O(\frac{1}{(1-\gamma)^{2}})$~(see Theorem 1 in~\citep{peng2021safe}), while value-based intervention function $\mathcal{T}_{\text{value}}$ has better safety guarantee by providing a tighter safety bound with the order of $O(\frac{1}{1-\gamma})$. 
We show in the Sec.~\ref{section:experiment-results} that the theoretical advances of $\mathcal{T}_{\text{value}}$ in training safety and efficiency can both be verified empirically. % in Sec.~\ref{section:experiment-results}.

% \paragraph{Better Exploration with Value-based Switching}
% \paragraph{Efficient Learning with Value-based Switching}
% starting point: when can we minimize $\beta$?
% \pzh{? What is this:}
% \xzh{Another theorem agnostic to T. Currently hidden.}
% \begin{theorem}
% \begin{equation}
% \left|V^{\pi_s}-V^{\pi_t}\right|\leq \frac{\beta R_{\max}}{(1-\gamma)^2} \left[\mathbb{E}_{s\sim d_\pi}\left\|\pi_s(\cdot \mid s)- \pi_{t}(\cdot \mid s)\right\|_1+\epsilon\right]
% \end{equation}
% \end{theorem}

\subsection{Implementation}
\label{section:algo}

% \bz{Give some transition sentences, describing how the previous analysis is linked to the development of a new algorithm in this section. }
Justified by the aforementioned advantages of the value-based intervention function, we propose a practical algorithm called \textbf{T}eacher-\textbf{S}tudent \textbf{S}hared \textbf{C}ontrol~(\textbf{TS2C}).
Its workflow is listed in Appendix~\ref{append:algo}. 
% In value-based intervention~(Eq.~\ref{v_to}), the teacher's  Q-function $Q^t$ is required to compute $\mathcal{T}_\text{value}(s)$. 
To obtain the teacher Q-network $Q^{\pi_t}$ in the value-based intervention function in Eq.~\ref{v_to},  we rollout the teacher policy $\pi_t$ and collect training samples during the warmup period.  Gaussian noise is added to the teacher's policy distribution to increase the state coverage during warmup. 
With limited training data the Q-network may fail to provide accurate estimation when encountering previously unseen states. 
We propose to use teacher Q-ensemble based on the idea of ensembling Q-networks~\citep{chen2021randomized}.
A set of ensembled teacher Q-networks $\mathbf{Q}^\phi$ with the same architecture and different initialization weights are built and trained with the same data.
To learn $\mathbf{Q}^\phi$ we follow the standard procedure in \citep{chen2021randomized} and optimize the following loss:
\begin{equation}
    \label{eq:td}
    L\left(\phi\right)=\mathbb{E}_{s, a \sim \mathcal{D}}\left[y-\mathrm{Mean}\left[\mathbf{Q}^\phi\left(s, a\right)\right]\right]^{2},
\end{equation}
where $y=\mathbb{E}_{s^{\prime} \sim \mathcal{D},a'\sim\pi_t(\cdot|s')+\mathcal{N}(0, \sigma)}\left[r+\gamma \mathrm{Mean}\left[\mathbf{Q}^\phi\left(s^{\prime}, a^{\prime}\right)\right] \right]$ is the Bellman target and $\mathcal{D}$ is the replay buffer for storing sequences $\{(s, a, r, s^\prime)\}$. 
Teacher will intervene when $\mathcal T_\text{value}$ returns 1 or the output variance of ensembled Q-networks surpasses the threshold, which means the agent is exploring unknown regions and requires guarding. We also use $\mathbf{Q}^\phi$ to compute the state-value functions in Eq.~\ref{v_to}, leading to the following practical intervention function:
\begin{equation}
\label{eq:ts2c_to}
    \mathcal{T}_\text{TS2C}(s)=
\begin{dcases*}
1
   &  if~~$\mathrm{Mean}\left[\mathbb{E}_{a\sim\pi_t(\cdot|s)}\mathbf{Q}^{\phi}\left(s,a\right)-\mathbb{E}_{a\sim\pi_s(\cdot|s)}\mathbf{Q}^{\phi}\left(s, a\right)\right]>\varepsilon_1$ \\
   \quad & or $\mathrm{Var}\left[\mathbb{E}_{a\sim\pi_s(\cdot|s)}\mathbf{Q}^{\phi}\left(s, a\right)\right]>\varepsilon_2$, \\[1ex]
   0
   & otherwise.
\end{dcases*}
\end{equation}
% Ablation studies show that the introduction of Gaussian noise during warmup and ensembled Q-networks leads to more accurate value estimation for intervention and in turn improves the final performance of TS2C.
% \pzh{We never discuss Ensemble Q in Alg.1. Does the ensemble Q mean you are using different takeover function than $T_{value}$? If so, why never discuss this in Method?}
% When training the student policy (line 8-15), we follow SAC~\citep{haarnoja2018soft}'s training pipeline. 
% except that the action interacting with the environment is sampled from the behavior policy $\pi_b$, where $\pi_b$ is computed according to the intervention function $\mathcal{T}_\text{value}(s)$.  
Eq.~\ref{sdd_to} shows that the distributional shift and the performance gap to oracle can be reduced with smaller $\beta$, i.e., less teacher intervention. 
Therefore, we minimize the amount of teacher intervention via adding negative reward to the transitions one step before the teacher intervention.
Incorporating intervention minimization, we use the following loss function to update the student's Q-network parameterized by $\psi$:
\begin{equation}
    \label{eq:loss-q}
    L(\psi)=\mathbb{E}_{s, a \sim \mathcal{D}}\left[\left(y' - Q^\psi\left(s, a\right)\right)^{2}\right],
\end{equation}
where $y' = \mathbb{E}_{s^{\prime} \sim \mathcal{D},a'\sim\pi_b(\cdot|s')}\left[r-\lambda\mathcal{T}_\text{TS2C}(s')+\gamma Q^\psi\left(s^{\prime}, a^{\prime}\right) \right.$ $-\left.\alpha\log\pi_b(a'|s')\right]$ is the soft Bellman target with intervention minimization.
% and $\mathcal{D}$ is the probability distribution over sequences $(s, a, r, s^\prime)$ in the replay buffer $D$.
$\lambda$ is the hyperparameter controlling the intervention minimization. $\alpha$ is the coefficient for maximum-entropy learning updated in the same way as Soft Actor Critic (SAC)~\citep{haarnoja2018soft}. To update the student's policy network parameterized by $\theta$, we apply the objective used in SAC as:
\begin{equation}
    \label{eq:loss-pi}
    L(\theta)=\mathbb{E}_{s \sim \mathcal{D}}\left[\mathbb{E}_{a \sim \pi^\theta(\cdot | s)}\left[\alpha \log \left(\pi^\theta\left(a \mid s\right)\right)-Q^\psi\left(s, a\right)\right]\right].
\end{equation}
% \vspace{-0.5cm}

\section{Experiments}
We conduct experiments to investigate the following questions:
(1) Can agents trained with TS2C achieve super-teacher performance with imperfect teacher policies while outperforming other methods in the Teacher-Student Framework (TSF)?
(2) Can TS2C provide safety guarantee and improve training efficiency compared to algorithms without teacher intervention? 
(3) Is TS2C robust in different environments and teacher policies trained with different algorithms?
To answer questions (1)(2), we conduct preliminary training with the PPO algorithm~\citep{schulman2017proximal} and save checkpoints on different timesteps. Policies in different stages of PPO training are used as teacher policies in TS2C and other algorithms in the TSF. With regard to question (3), we use agents trained with PPO~\citep{schulman2017proximal}, SAC~\citep{haarnoja2018soft} and Behavior Cloning as the teacher policies from different sources. 
% As for question (4), we show visualizations of the agent trajectory resulting from different intervention mechanisms.

% \subsection{The Autonomous Driving Simulator}
% One concern with shared control is that the student may simply So agents trained with policy distillation may overfit to the training environments. Also, to incorporate human control as the teacher policy, the task MDP should have an action space with low dimensions and the environment rendering should be efficient enough to train in real-time. 

% \pzh{Several sentences to justify why we use this simulator? Why don't we work on other environments? Why using this environment is enough?}
% \pzh{A paragraph talking about the evaluation protocol: What metric we reporting? Do we split the training set and test set? How many steps we are using? How to make sure the comparison with the baselines is fair?}
% \pzh{A paragraph talking about the baselines. What are the baselines? Why we use them?}
\begin{figure*}[t]
    \centering
    \includegraphics[width=0.95\textwidth]{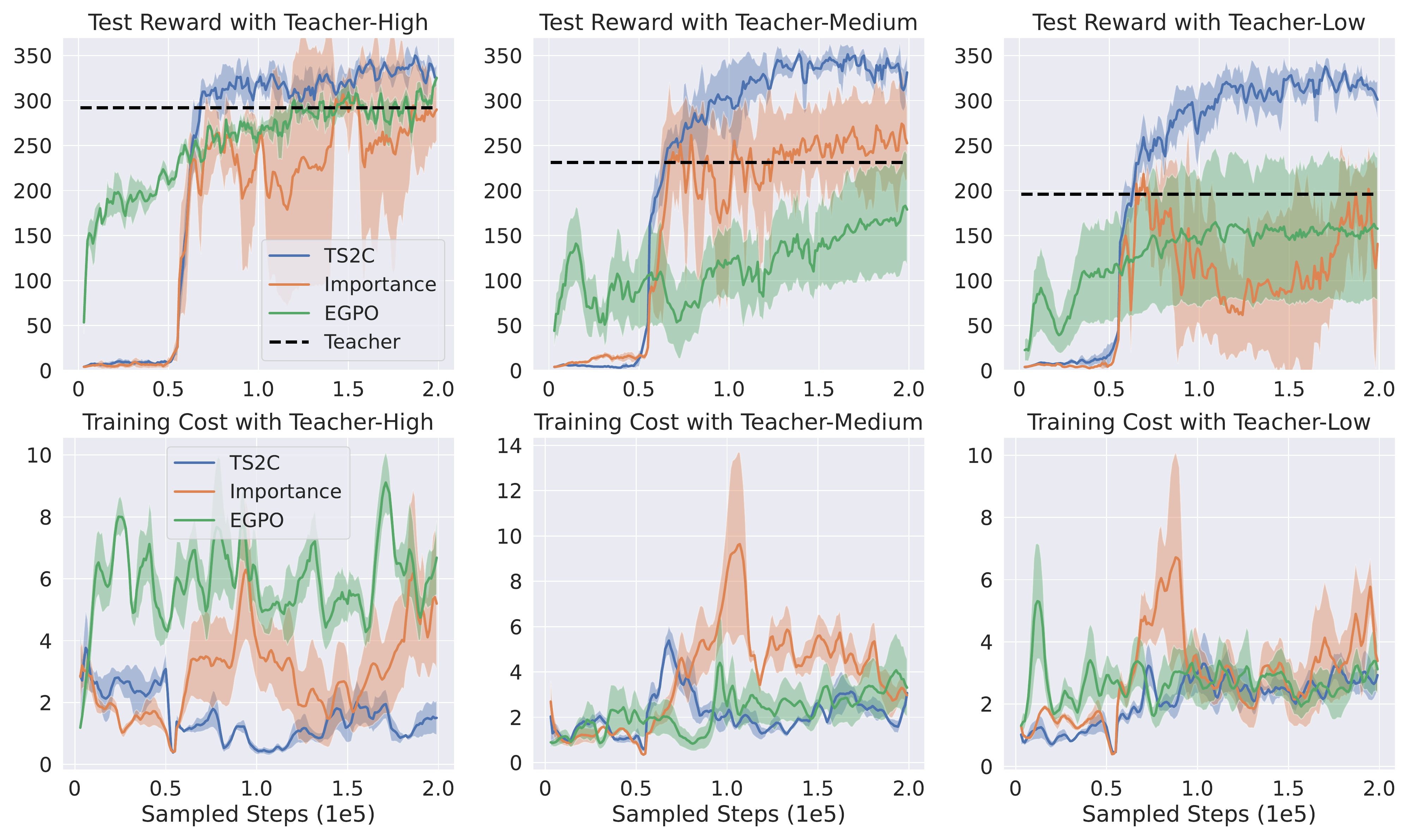}
    \caption{Comparison between our method TS2C and other algorithms with teacher policies providing online demonstrations. ``Importance'' refers to the Importance Advising algorithm. For each column, the involved teacher policy has high, medium, and low performance respectively. }
    \label{fig:q1}
\end{figure*}
\subsection{Environment Setup}

The majority of the experiments are conducted on the lightweight driving simulator MetaDrive~\citep{li2021metadrive}. One concern with TSF algorithms is that the student may simply record the teacher's actions and overfit the training environment. MetaDrive can test the generalizability of learned agents on unseen driving environments with its capability to generate an unlimited number of scenes with various road networks and traffic flows. We choose 100 scenes for training and 50 held-out scenes for testing. Examples of the traffic scenes from MetaDrive are shown in Appendix~\ref{append:exp_setup}. In MetaDrive, the objective is to drive the ego vehicle to the destination without dangerous behaviors such as crashing into other vehicles. 
The reward function consists of the dense reward proportional to the vehicle speed and the driving distance, and the terminal +20 reward when the ego vehicle reaches the destination. Training cost is increased by 1 when the ego vehicle crashes or drives out of the lane. 
% The average episodic \textbf{test reward} evaluated on the held-out test scenes is used for a fair comparison between different methods. Higher test reward means higher generalizable performance in unseen environment.
% We also compare the \textbf{training cost}. Lower training cost reflects a safer training process which is essential if the algorithm is to be applied in real-world environments. 
To evaluate TS2C's performance in different environments, we also conduct experiments in several environments of the MuJoCo simulator~\citep{todorov2012mujoco}.  
% \textcolor{is pendulum also part of MuJoCo?}

% Please visit \href{https://drive.google.com/file/d/19WAyArGqmUkuPSjSnTAnT2p_myDnOI6O/view?usp=sharing}{this link}. \bz{please include any demo video in the supp, including external links has the risk to compromise the double-blind}

\begin{figure}[t]
    \vspace{-0.2cm}
    \centering
    \includegraphics[width=0.95\textwidth]{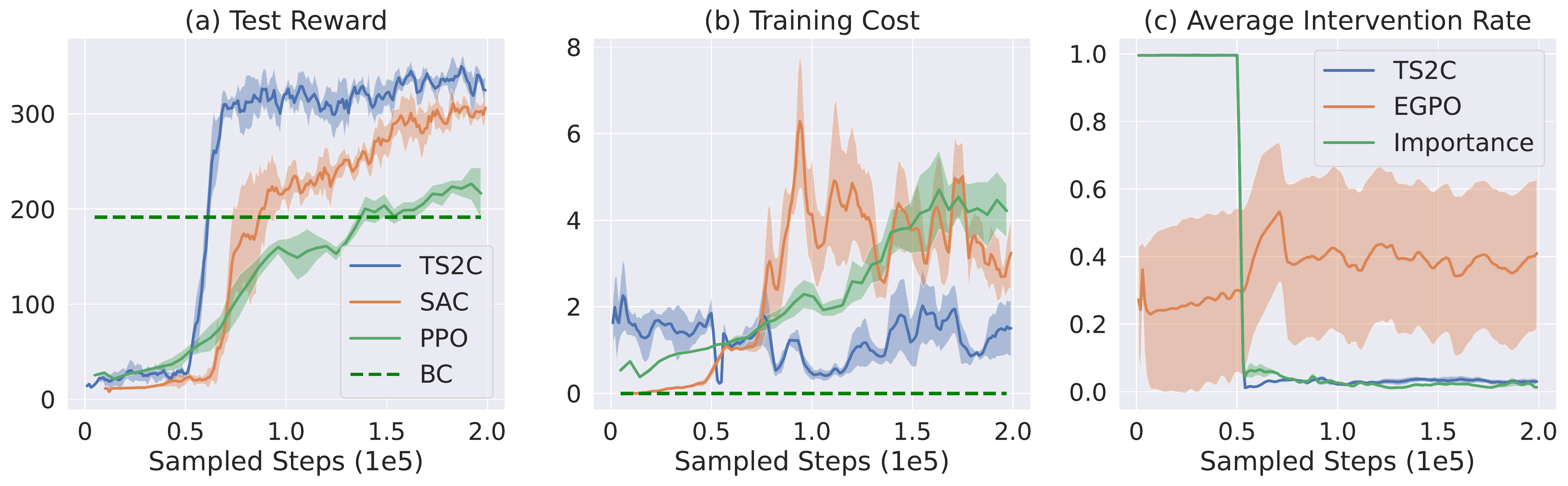}
    \vspace{-0.2cm}
    \caption{Figures (a) and (b) shows the comparison of efficiency and safety between TS2C and baseline algorithms without teacher policies providing online demonstrations. Figure (c) shows the comparison of the average intervention rate between TS2C and two baseline algorithms in the TSF.}
    \label{fig:v_to_plot}
\end{figure}

\begin{figure*}[t]
    \centering
    \includegraphics[width=0.98\textwidth]{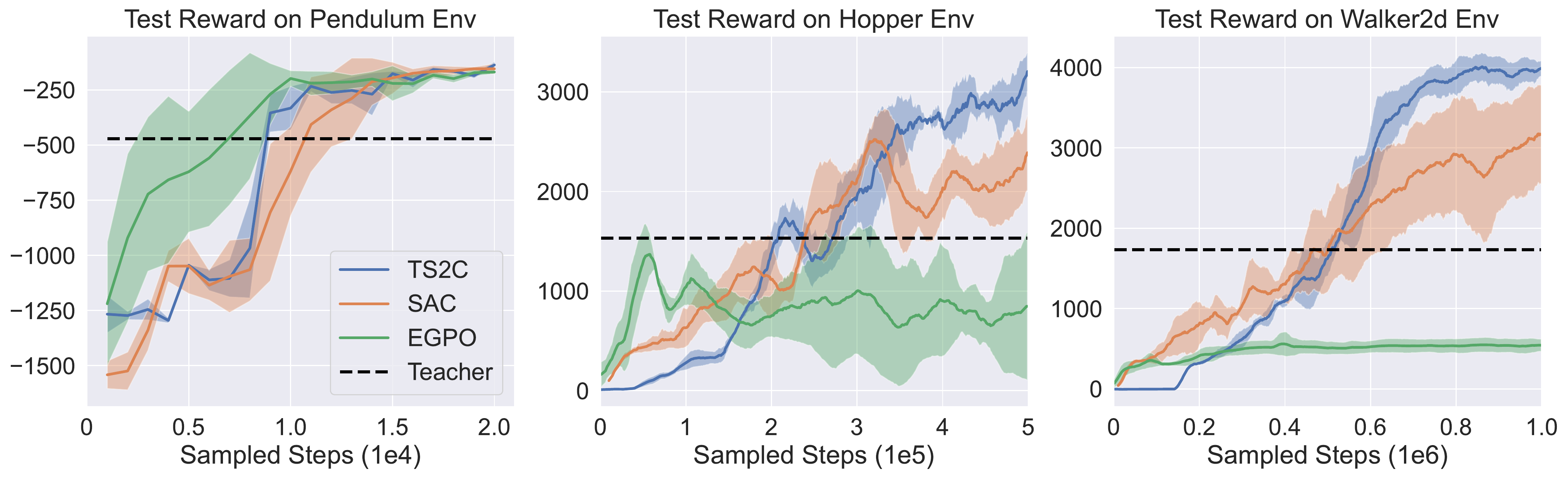}
    \caption{Performance comparison between our method TS2C and baseline algorithms on three environments from MuJoCo.}
    \label{fig:mujoco}
\end{figure*}

\begin{figure*}[t]
    \centering
    \includegraphics[width=0.95\textwidth]{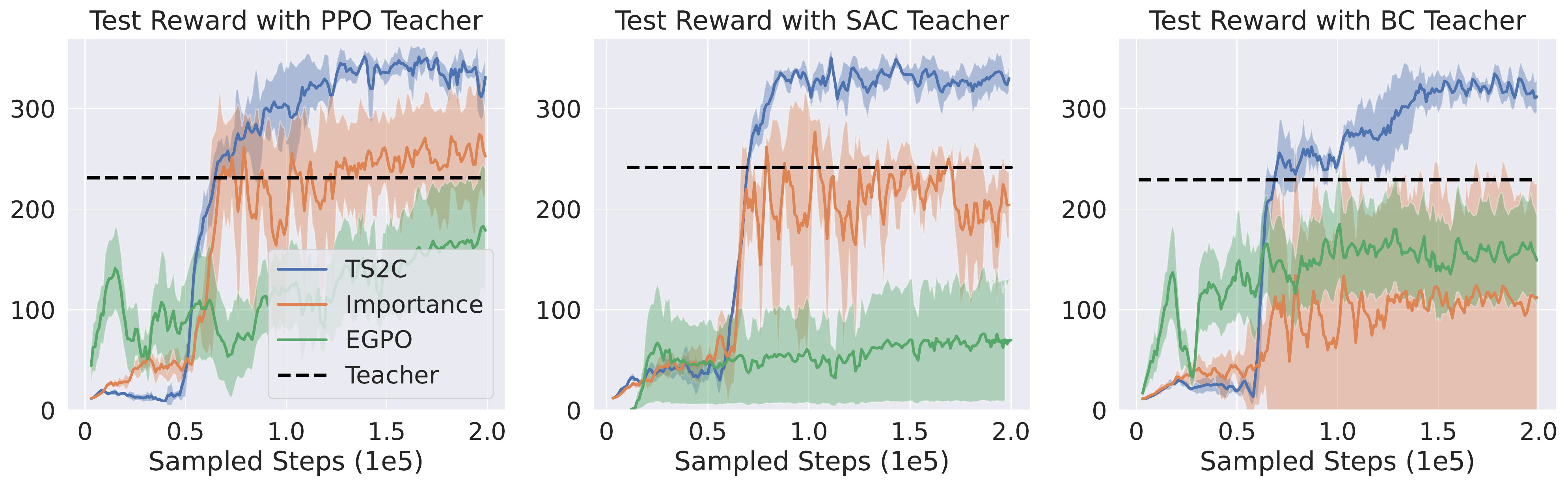}
    \caption{Performance comparison between our method TS2C and baseline algorithms with teacher policies providing online demonstrations. The teacher policies are trained by PPO, SAC, and behavior cloning respectively. \bz{The caption looks confusing or lacks detailed description, what do you mean by other algorithms not in the TSF?}}
    \label{fig:q3}
\end{figure*}
\subsection{Baselines and Implementation Details}
Two sets of algorithms are selected as baselines to compare with. One includes traditional RL and IL algorithms without the TSF. By comparing with these methods we can demonstrate how TS2C improves the efficiency and safety of training. Another set contains previous algorithms with the TSF, including
Importance Advising~\citep{torrey2013teaching} and EGPO~\citep{peng2021safe}. The original Importance Advising uses an intervention function based on the range of the Q-function: $I(s)=\max _{a \in A} Q_{\mathcal{D}(s, a)}-\min _{a \in A} Q_{\mathcal{D}(s, a)}$, where $Q_{\mathcal{D}}$ is the Q-table of the teacher policy. Such Q-table is not applicable in the Metadrive simulator with continuous state and action spaces. In practice, we sample $N$ actions from the teacher's policy distribution and compute their Q-values on a certain state. The intervention happens if the range, i.e., the maximum value minus the minimum value, surpass a certain threshold $\varepsilon$.  The EGPO algorithm uses an intervention function similar to the action-based intervention function introduced in section \ref{section:theoretical-analysis-different-forms-takeover}. All algorithms are trained with 4 different random seeds. In all figures the solid line is computed with the average value across different seeds and the shadow implies the standard deviation. We leave detailed information on the experiments and the result of ablation studies on hyperparameters in Appendix~\ref{append:exp_setup}.
% At each step, DAgger applies either teacher or student action randomly following a Binomial distribution and the probability of selecting teacher policy linearly decays. EGPO applies action-based intervention function. For fair comparison, the imitation-based loss in original DAgger algorithm is augmented with the reward-based loss as in SAC, leading to the DAgger-R algorithm.
% The teacher policies are evaluated on the same test scenarios as the baseline algorithms.  
% The experiments are conducted based on RLLib~\citep{liang2018rllib}
% and the open-source code of EGPO~\citep{peng2021safe}. 
% All algorithms are trained with 200,000 timesteps and 4 different random seeds. 
% TS2C is time-efficient in that only the inference of teacher policy is needed as an extra module. 
% The training time of a single trial is around 6 hours with two CPUs and one GeForce RTX 2080Ti GPU. 
% The choice of hyperparameters is in Appendix~\ref{append:exp_setup}.
% More details on the experiment setup are in Appendix~\ref{append:exp_setup}.

\subsection{Results}
\label{section:experiment-results}

\paragraph{Super-teacher performance and better safety guarantee}

The training result  with three different levels of teacher policy can be seen in Fig.~\ref{fig:q1}. The first row shows that the performance of TS2C is not limited by the imperfect teacher policies. It converges within 200k steps, independent of different performances of the teacher. EGPO and Importance Advicing is clearly bounded by teacher-medium and teacher-low, performing much worse than TS2C with imperfect teachers.
The second row of Fig.~\ref{fig:q1} shows TS2C has lower training cost than both algorithms. 
% DAgger-R has complex training dynamics and inferior performance. The training cost is large  in the early stage. As the intervention rate decays, the agent remains to drive slowly with teacher occasionally intervening and thus has low training cost. Detailed explanation on DAgger-R can be found in Appendix~\ref{append:exp_setup}.
% In the early stage of DAgger-R training, the training cost is large due to the discontinuity of behavior policies, that is teacher and student frequently switch control. Later, with the decay of intervention rate, we find the student keeps very low velocity with teacher occasionally intervening and thus has low training cost after about 70k timesteps. However, the student learns to drive normally in test environment without teacher but it only learns lane keeping in straight road and fails to turn.
Compared to EGPO and Importance Advising, the intervention mechanism in TS2C is better-designed and leads to better behaviors.

% With random teacher intervention, DAgger-R algorithm has high cost when training starts because the discontinuity. Then the teacher intervenes less frequently and the agent get stuck without teacher's guidance. Therefore, it has low cost after about 70k timesteps as well as little progress in performance. 

% The performance curve of DAgger-R is an almost flat line since it is an imitation learning algorithm and It can outperform the teacher policy in some cases, but the overall performance is still lower than TS2C.
% \pzh{This sentence is not convincing: The DAgger-R algorithm seems to get stuck during training and has low cost s. }

\vspace{-0.3cm}
\paragraph{Better performance with TSF}
The result of comparing TS2C with baseline methods without the TSF can be seen in Fig.~\ref{fig:v_to_plot}(a)(b). We use the teacher policy with a medium level of performance to train the student in TS2C. It achieves better performance and lower training cost than the baseline algorithms SAC, PPO, and BC. The comparative results show the effectiveness of incorporating teacher policies in online training.
The behavior cloning algorithm does not involve online sampling in the training process, so it has zero training cost. 
% We also compare TS2C trained from different levels of the teacher policy with the baseline algorithms without TSF and place the results in Appendix~\ref{append:exp_result}.
% \begin{figure}[t]
%     \centering
%     \includegraphics[width=\linewidth]{}
%     % \includegraphics[width=0.5\textwidth]{fig/q1_cost.png}
%     \vspace{-5mm}
%     \caption{Comparison of efficiency and safety between TS2C and baseline algorithms without shared control.}
%     \label{fig:q2}
% \end{figure}
\vspace{-0.3cm}
\paragraph{Extension for different environments and teacher policies}
The performances of TS2C in different MuJoCo environments and different sources of teacher policy are presented in Fig.~\ref{fig:mujoco} and~\ref{fig:q3} respectively. The figures show that TS2C is generalizable to different environments. It can also make use of the teacher policies from different sources and achieve super-teacher performance consistently. Our TS2C algorithm can outperform SAC in all three MuJoCo environments taken into consideration. On the other hand, though the EGPO algorithm has the best performance in the Pendulum environment, it struggles in the other two environments, namely Hopper and Walker. 
% We discuss TS2C's performance comparison with baseline algorithms in Appendix~\ref{append:exp_discuss}.
% This is because EGPO adopts action-based intervention function. 

% \pzh{Why SAC policy is not so stochastic compared to PPO? It is entropy-maximization algo??}
% intervene based on step-wise action likelihood and only requires estimation teacher's action value.
% to sample from the teacher's policy distribution and is robust to its stochasticity.

% \paragraph{Ablation Studies} We also conduct ablation studies and present the results in Fig.~\ref{fig:ablation}. We find the intervention cost and ensembled value networks are important to the algorithm's performance, while different variance thresholds in the intervention function has little influence. Also, TS2C with action-based intervention function behaves poorly in accordance with the theoretical analysis in Section~\ref{section:theoretical-analysis-different-forms-takeover}.

\subsection{Effects of Intervention Functions}
\begin{wrapfigure}[22]{r}{0.45\textwidth}
\centering
\vspace{-0.9cm}
\includegraphics[width=0.87\linewidth]{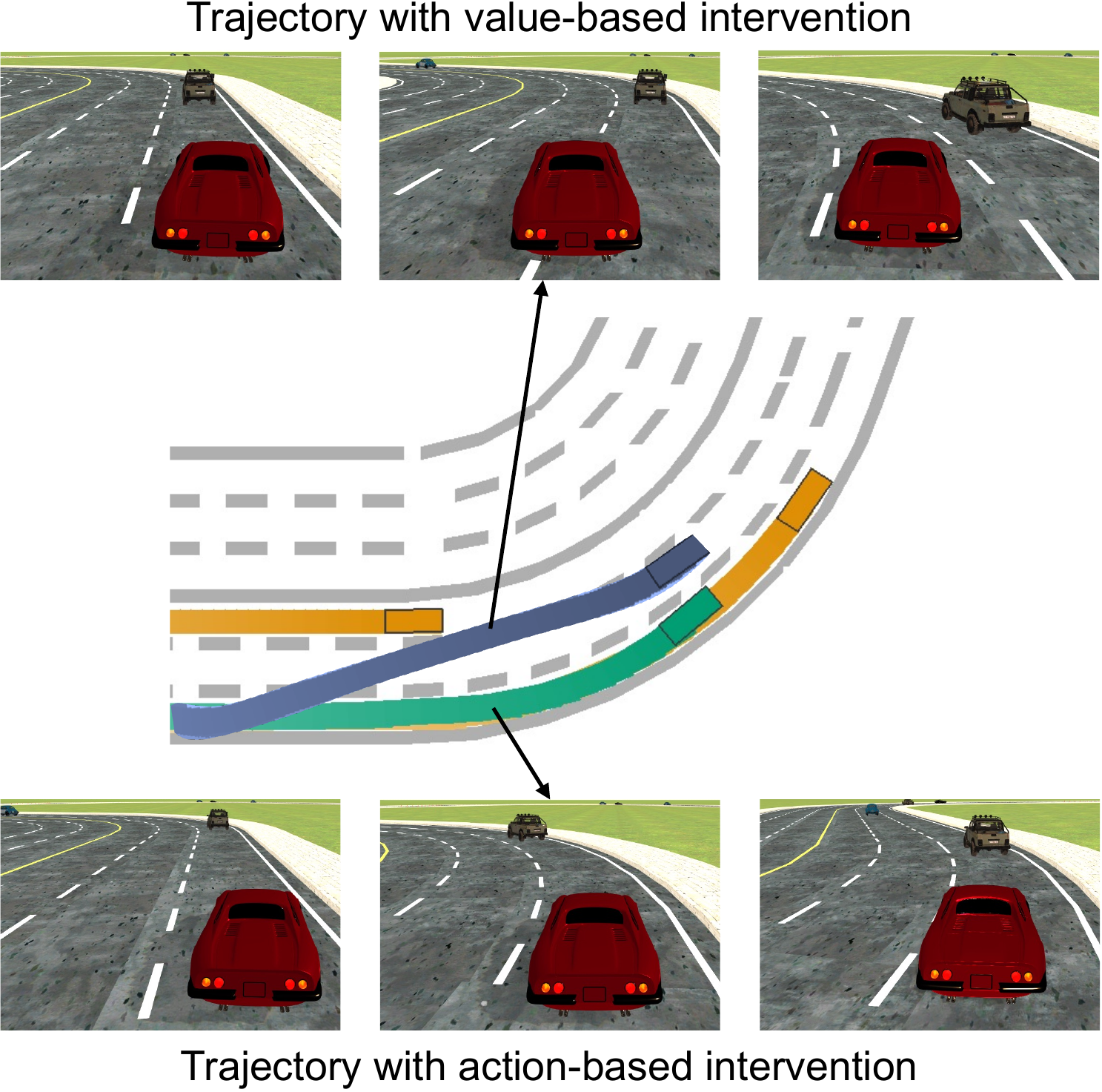}
\vspace{-0.1cm}
\caption{Visualization of the trajectories resulting from different intervention mechanisms. The trajectories of irrelevant traffic vehicles are marked orange. 
As in the green trajectory, action-based intervention make the car following the front vehicle. Value-based intervention instead can learn overtaking behavior as in blue trajectory.
% \bz{it will be better to draw two differtent colors of the green trajectories. Or put them into two identical scenes.}
}
\label{fig:v_to_vis}
\end{wrapfigure}

We further investigate the intervention behaviors under different intervention functions. 
As shown in Fig.~\ref{fig:v_to_plot}(c), the average intervention rate 
% $\beta=\mathbb{E}_{s \sim d_{\pi_b}}\mathcal{T}(s)$
$\mathbb{E}_{s \sim d_{\pi_b}}\mathcal{T}(s)$
% with the guidance of Teacher-Medium and show the comparison in Fig.~\ref{fig:v_to_plot}(c). The intervention rate 
of TS2C drops quickly as soon as the student policy takes control. The teacher policy only intervenes during a very few states where it can propose actions with higher value than the students. 
% It is on these key states that agent's exploration is guided by the teacher. 
The intervention rate of EGPO remains high due to the action-based intervention function: the teacher intervenes whenever the student act differently.
% The intervention rate of DAgger-R is being decayed linearly as specified by algorithm hyperparameters.

We also show different outcomes of action-based and value-based intervention functions with screenshots in the MetaDrive simulator. In Fig.~\ref{fig:v_to_vis}
% the first and third rows are first-view pictures and  the second row is in the top-down view, where the vehicles are marked by the black rectangle. 
% As shown in the right part of Fig.~\ref{fig:v_to_vis}, 
the ego vehicle happens to drive behind a traffic vehicle which is in an orange trajectory. With action-based intervention the teacher takes control and keeps following the front vehicle, as shown in the green trajectory.
In contrast, with the value-based intervention the student policy proposes to turn left and overtake the front vehicle as in the blue trajectory. Such action has higher return and therefore is tolerable by $T_\text{TS2C}$, leading to a better agent trajectory.

\section{Conclusion and Discussion}
% \pzh{I rush this section. Please modify it.}
% What we done
In this work, we conduct theoretic analysis on intervention-based RL algorithms in the Teacher-Student Framework.
% What we discover
It is found that while the intervention mechanism has better properties than some imitation learning methods, using an action-based intervention function limits the performance of the student policy.
% What we propose
We then propose TS2C, a value-based intervention scheme for online policy optimization with imperfect teachers. We provide the theoretic guarantees on its exploration ability and safety.
% What's the exp result
Experiments show that the proposed TS2C method achieves consistent performance independent to the teacher policy being used. Our work brings progress and potential impact to relevant topics such as active learning, human-in-the-loop methods, and safety-critical applications.
% What's the exp result infers
%TS2C thus benefits the community for its generality toward teacher and student shared control.
% Meanwhile, t

\textbf{Limitations.} The proposed algorithm assumes the agent can access environment rewards, and thus defines the intervention function based on value estimations. It may not work in tasks where reward signals are inaccessible. This limitation could be tackled by considering reward-free settings and employing unsupervised skill discovery~\citep{benjamin2019diversity,arthur2019survey}. These methods provide proxy reward functions that can be used in teacher intervention.
\pzh{Then just write your justification in the text. When I say "I can't understand" it means the logic in the existing text is not well connected.}

\pzh{A very strong limitation is that TS2C fails in more complex environment. As we will post the failed experiments in this version, we can add some discussion on this here!!}
\xzh{As we can come up with three MuJoCo envs where TS2C works well, we no longer need to post the failed experiments. Also, TS2C may still have a reasonable performance in complex environments, such as Ant-v2, with enough hyper-parameter tuning.}

\bibliography{iclr2023_conference}
\bibliographystyle{iclr2023_conference}

\newpage
\appendix
\section{Theorems in TS2C}
\subsection{Detailed Proof}
\label{append:proof}
We start the proof with the restatement of Lem.~\ref{lemma:1} in Sec.~\ref{section:theoretical-analysis}.
\begin{lemma}[Lemma 4.1 in~\citep{xu2020error}]
\label{lemma:1_append}
\begin{equation}
\left\|d_{\pi}-d_{\pi'}\right\|_1 \leqslant \frac{\gamma}{1-\gamma} \mathbb{E}_{s \sim d_{\pi}}\left\|\pi(\cdot \mid s)-\pi'(\cdot \mid s)\right\|_1.
\end{equation}
\end{lemma}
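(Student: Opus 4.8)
The plan is to prove the state-distribution bound by writing the discounted occupancy measure as a resolvent $(I-\gamma P_\pi)^{-1}$ and applying the second resolvent identity; this converts the difference $d_\pi-d_{\pi'}$ into an expression controlled by the per-state transition-kernel difference, which in turn is controlled by the policy discrepancy.

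First I would fix notation. Since $\mathcal S$ is finite, I treat distributions over $\mathcal S$ as row vectors and introduce the induced state-to-state transition matrix $P_\pi$ with $P_\pi(s'\mid s)=\sum_a \pi(a\mid s)P(s'\mid s,a)$, which is row-stochastic. The time-$t$ marginal from $d_0$ is $d_0 P_\pi^t$, so $d_\pi=(1-\gamma)d_0\sum_{t\ge 0}\gamma^t P_\pi^t=(1-\gamma)d_0(I-\gamma P_\pi)^{-1}$; the Neumann series converges and the inverse exists because $\gamma\in(0,1)$ and $P_\pi$ is a contraction in the $\ell_1\!\to\!\ell_1$ operator norm. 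In particular $(1-\gamma)d_0(I-\gamma P_\pi)^{-1}=d_\pi$, a fact I reuse below. Next I would apply $A^{-1}-B^{-1}=A^{-1}(B-A)B^{-1}$ with $A=I-\gamma P_\pi$ and $B=I-\gamma P_{\pi'}$, so that $B-A=\gamma(P_\pi-P_{\pi'})$ and hence $(I-\gamma P_\pi)^{-1}-(I-\gamma P_{\pi'})^{-1}=\gamma(I-\gamma P_\pi)^{-1}(P_\pi-P_{\pi'})(I-\gamma P_{\pi'})^{-1}$. Left-multiplying by $(1-\gamma)d_0$ and using the reused fact gives $d_\pi-d_{\pi'}=\gamma\, d_\pi(P_\pi-P_{\pi'})(I-\gamma P_{\pi'})^{-1}$. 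Taking $\ell_1$ norms, and using that $P_{\pi'}$ is row-stochastic so $\|xP_{\pi'}\|_1\le\|x\|_1$ for any signed row vector $x$ (whence $\|x(I-\gamma P_{\pi'})^{-1}\|_1\le\|x\|_1/(1-\gamma)$), yields $\|d_\pi-d_{\pi'}\|_1\le\frac{\gamma}{1-\gamma}\|d_\pi(P_\pi-P_{\pi'})\|_1$. Finally I would bound the remaining factor by the triangle inequality: $\|d_\pi(P_\pi-P_{\pi'})\|_1\le\sum_s d_\pi(s)\sum_{s'}|P_\pi(s'|s)-P_{\pi'}(s'|s)|\le\sum_s d_\pi(s)\sum_a|\pi(a|s)-\pi'(a|s)|=\mathbb{E}_{s\sim d_\pi}\|\pi(\cdot|s)-\pi'(\cdot|s)\|_1$, where the middle step pushes the absolute value inside the sum over $a$ and uses $\sum_{s'}P(s'|s,a)=1$. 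Chaining the last two displays gives the claim.

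This argument is short and largely mechanical; the only point demanding care is the operator-norm bookkeeping, namely verifying $\|xP\|_1\le\|x\|_1$ for arbitrary (signed) $x$ when $P$ is row-stochastic — it holds since $\sum_{s'}|\sum_s x_s P(s'|s)|\le\sum_s|x_s|\sum_{s'}P(s'|s)=\|x\|_1$ — and then iterating it across the Neumann series. A resolvent-free alternative telescopes $P_\pi^t-P_{\pi'}^t=\sum_{k=0}^{t-1}P_\pi^k(P_\pi-P_{\pi'})P_{\pi'}^{t-1-k}$, bounds each term in $\ell_1$, and sums the resulting series in $t$; this reproduces the same constant $\gamma/(1-\gamma)$ and avoids invoking invertibility explicitly.
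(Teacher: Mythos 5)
Your proof is correct, and it is worth noting that the paper itself does not prove this statement at all: it is imported verbatim as Lemma 4.1 of the cited work of Xu et al.\ (2020) (and credited to Achiam et al.\ 2017 in the main text), so there is no in-paper argument to compare against line by line. What you supply is essentially the standard derivation that underlies those cited results: writing $d_\pi=(1-\gamma)d_0(I-\gamma P_\pi)^{-1}$, applying the resolvent identity $A^{-1}-B^{-1}=A^{-1}(B-A)B^{-1}$ to get $d_\pi-d_{\pi'}=\gamma\,d_\pi(P_\pi-P_{\pi'})(I-\gamma P_{\pi'})^{-1}$, bounding $(I-\gamma P_{\pi'})^{-1}$ by $1/(1-\gamma)$ in the $\ell_1\!\to\!\ell_1$ sense, and reducing the kernel difference to the per-state policy discrepancy via $\sum_{s'}P(s'\mid s,a)=1$. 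Each of these steps checks out, the $d_\pi$ (rather than $d_{\pi'}$) appearing in the expectation is forced correctly by which side of the resolvent difference you place the occupancy on, and your finite-state matrix treatment is consistent with the paper's stated finite $\mathcal S$, $\mathcal A$ (the telescoping variant you sketch would also extend to settings where invertibility is awkward to invoke). So relative to the paper your proposal buys self-containedness: the paper treats the lemma as a black box, while you reconstruct the standard argument with the right constant $\gamma/(1-\gamma)$; the only caution is that your argument, like the citation, is stated for the discounted occupancy with the $(1-\gamma)$ normalization used in the paper's definition of $d_\pi$, so the normalization must be kept consistent when the lemma is later instantiated with $\pi_b$ and $\pi_s$ in Theorem 3.2.
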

Thm~\ref{th:sdd_to} can be derived by substituting $\pi$ and $\pi'$ in Lem~\ref{lemma:1_append} with $\pi_b$ and $\pi_s$.
\begin{theorem} [Restatement of Thm.~\ref{th:sdd_to}]
\label{th:sdd_to_append}
For any behavior policy $\pi_b$ deduced by a teacher policy $\pi_t$, a student policy $\pi_s$ and a intervention function $\mathcal{T}(s)$, the state distribution discrepancy between $\pi_b$ and $\pi_s$ is bounded by policy discrepancy and intervention rate:
\begin{equation}
\label{sdd_to_append}
\left\|d_{\pi_b}- d_{\pi_s}\right\|_1\leqslant
\frac{\beta\gamma}{1-\gamma} \mathbb{E}_{s \sim d_{\pi_b}} \left\|\pi_t(\cdot \mid s)- \pi_s(\cdot \mid s)\right\|_1, 
\end{equation}
where $\beta=\frac{\mathbb{E}_{s \sim d_{\pi_{b}}}\left\|\mathcal{T}(s)\left[\pi_{t}(\cdot \mid s)-\pi_{s}(\cdot \mid s)\right]\right\|_{1}}{\mathbb{E}_{s \sim d_{\pi_{b}}}\left\|\pi_{t}(\cdot \mid s)-\pi_{s}(\cdot \mid s)\right\|_{1}}$ is the \textit{weighted expected intervention rate}. 
% \bz{I have included the sentence in this theorem directly here. and remove the definition in the following theorem, please check if it fits.}
\end{theorem}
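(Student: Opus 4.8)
The plan is to invoke Lemma~\ref{lemma:1_append} directly, with $\pi = \pi_b$ and $\pi' = \pi_s$, which yields
\begin{equation*}
\left\|d_{\pi_b} - d_{\pi_s}\right\|_1 \leqslant \frac{\gamma}{1-\gamma}\,\mathbb{E}_{s\sim d_{\pi_b}}\left\|\pi_b(\cdot\mid s) - \pi_s(\cdot\mid s)\right\|_1.
\end{equation*}
The choice $\pi = \pi_b$ rather than the reverse is deliberate: it places the expectation on the right-hand side over $d_{\pi_b}$, which is exactly the state distribution appearing in the definition of $\beta$.

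Next I would simplify the integrand using the definition of the behavior policy, $\pi_b(\cdot\mid s) = \mathcal{T}(s)\pi_t(\cdot\mid s) + (1-\mathcal{T}(s))\pi_s(\cdot\mid s)$. Subtracting $\pi_s(\cdot\mid s)$ gives the pointwise identity $\pi_b(\cdot\mid s) - \pi_s(\cdot\mid s) = \mathcal{T}(s)\bigl(\pi_t(\cdot\mid s) - \pi_s(\cdot\mid s)\bigr)$, hence $\left\|\pi_b(\cdot\mid s) - \pi_s(\cdot\mid s)\right\|_1 = \left\|\mathcal{T}(s)\bigl[\pi_t(\cdot\mid s) - \pi_s(\cdot\mid s)\bigr]\right\|_1$. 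Taking the expectation over $s\sim d_{\pi_b}$, and then multiplying and dividing by $\mathbb{E}_{s\sim d_{\pi_b}}\left\|\pi_t(\cdot\mid s) - \pi_s(\cdot\mid s)\right\|_1$, produces exactly the factor $\beta$ from the theorem statement:
\begin{equation*}
\mathbb{E}_{s\sim d_{\pi_b}}\left\|\pi_b(\cdot\mid s) - \pi_s(\cdot\mid s)\right\|_1 = \beta\,\mathbb{E}_{s\sim d_{\pi_b}}\left\|\pi_t(\cdot\mid s) - \pi_s(\cdot\mid s)\right\|_1.
\end{equation*}
Substituting this into the first display gives the claimed bound. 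Finally, since $\mathcal{T}(s)\in[0,1]$ the numerator of $\beta$ never exceeds its denominator and both are nonnegative, so $\beta\in[0,1]$, as asserted.

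There is essentially no serious obstacle here; the only point requiring a moment's care is the direction in which Lemma~\ref{lemma:1_append} is applied, together with recognizing that the weighting implicit in the $L_1$ average of $\mathcal{T}(s)[\pi_t(\cdot\mid s) - \pi_s(\cdot\mid s)]$ is precisely the quantity named $\beta$, so that the intermediate step holds with equality and no additional estimate is needed. (Had one wanted the bound phrased with the unweighted intervention rate $\mathbb{E}_{s\sim d_{\pi_b}}\mathcal{T}(s)$, an extra step uniformly bounding the policy discrepancy would be required; the weighted form stated here avoids this entirely.)
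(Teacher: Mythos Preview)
Your proposal is correct and follows essentially the same approach as the paper: apply Lemma~\ref{lemma:1_append} with $\pi=\pi_b$, $\pi'=\pi_s$, expand $\pi_b-\pi_s=\mathcal{T}(s)(\pi_t-\pi_s)$, and recognize the resulting ratio as $\beta$. Your additional remarks on the direction of the lemma and the observation that $\beta\in[0,1]$ are helpful commentary but do not alter the argument.
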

\begin{proof}
\begin{equation}
    \begin{aligned}
\left\|d_{\pi_b}-d_{\pi_s}\right\|_1 &\leqslant \frac{\gamma}{1-\gamma} \mathbb{E}_{s \sim d_{\pi_b}}\left\|\pi_b(\cdot \mid s)-\pi_s(\cdot \mid s)\right\|_1\\
&=\frac{\gamma}{1-\gamma} \mathbb{E}_{s \sim d_{\pi_b}}\left\|\mathcal{T}(s)\pi_t(\cdot \mid s)+(1-\mathcal{T}(s))\pi_s(\cdot \mid s)-\pi_s(\cdot \mid s)\right\|_1\\
&=\frac{\gamma}{1-\gamma} \mathbb{E}_{s \sim d_{\pi_b}}\left\|\mathcal{T}(s)\left[\pi_t(\cdot \mid s)-\pi_s(\cdot \mid s)\right]\right\|_1\\
&=\frac{\beta\gamma}{1-\gamma} \mathbb{E}_{s \sim d_{\pi_b}} \left\|\pi_t(\cdot \mid s)- \pi_s(\cdot \mid s)\right\|_1.
    \end{aligned}
\end{equation}
\end{proof}
Based on Thm.~\ref{th:sdd_to}, we further prove that under the setting of shared control, the performance gap of $\pi_s$ to the optimal policy $\pi^*$ can be bounded by the gap between the teacher policy $\pi_t$ and $\pi^*$, together with the teacher-student policy difference. Therefore, training with the trajectory collected with mixed policy $\pi_b$ is to optimize an upper bound of the student's suboptimality. 
The following lemma is helpful in doing this.
\begin{lemma}
\begin{equation}
\left|J\left(\pi\right)-J\left(\pi^\prime\right)\right| \leqslant \frac{R_{\max}}{(1-\gamma)^2}\mathbb{E}_{s \sim d_{\pi}} \left\|\pi(\cdot \mid s)- \pi^\prime(\cdot \mid s)\right\|_1
\end{equation}
\end{lemma}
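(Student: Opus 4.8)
This is the classical performance-difference inequality, so the plan is to route through the performance difference lemma (Kakade--Langford). First I would establish the identity
\begin{equation}
J(\pi)-J(\pi') = \frac{1}{1-\gamma}\,\mathbb{E}_{s\sim d_{\pi}}\left[\int_{\mathcal{A}}\bigl(\pi(a\mid s)-\pi'(a\mid s)\bigr)\,Q^{\pi'}(s,a)\,\mathrm{d}a\right].
\end{equation}
This follows by expanding $V^{\pi}(s)-V^{\pi'}(s)=\mathbb{E}_{a\sim\pi(\cdot\mid s)}\bigl[Q^{\pi}(s,a)-Q^{\pi'}(s,a)\bigr]+\int_{\mathcal{A}}\bigl(\pi(a\mid s)-\pi'(a\mid s)\bigr)Q^{\pi'}(s,a)\,\mathrm{d}a$, using $Q^{\pi}(s,a)-Q^{\pi'}(s,a)=\gamma\,\mathbb{E}_{s'\sim P(\cdot\mid s,a)}\bigl[V^{\pi}(s')-V^{\pi'}(s')\bigr]$ to obtain a recursion in the state, unrolling it starting from $s_0\sim d_0$, and recognizing $\sum_{t\geqslant 0}\gamma^{t}\,\mathrm{Pr}(s_t=s;\pi,d_0)=\tfrac{1}{1-\gamma}d_{\pi}(s)$. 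Inside the bracket one may freely replace $Q^{\pi'}$ by the advantage $A^{\pi'}=Q^{\pi'}-V^{\pi'}$, or conversely, since $\int_{\mathcal{A}}\bigl(\pi(a\mid s)-\pi'(a\mid s)\bigr)V^{\pi'}(s)\,\mathrm{d}a=0$; I keep the $Q^{\pi'}$ form.

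Next I would bound the inner integral pointwise in $s$ by H\"{o}lder's inequality,
\begin{equation}
\left|\int_{\mathcal{A}}\bigl(\pi(a\mid s)-\pi'(a\mid s)\bigr)Q^{\pi'}(s,a)\,\mathrm{d}a\right|\leqslant \bigl\|\pi(\cdot\mid s)-\pi'(\cdot\mid s)\bigr\|_1\,\sup_{a\in\mathcal{A}}\bigl|Q^{\pi'}(s,a)\bigr|,
\end{equation}
and use that rewards lie in $[R_{\min},R_{\max}]$, so that $\sup_{s,a}\lvert Q^{\pi'}(s,a)\rvert\leqslant \tfrac{R_{\max}}{1-\gamma}$. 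Taking absolute values in the displayed identity, passing the absolute value inside the expectation over $d_{\pi}$ by the triangle inequality, and collecting the two constant factors $\tfrac{1}{1-\gamma}$ and $\tfrac{R_{\max}}{1-\gamma}$ gives exactly
\begin{equation}
\bigl|J(\pi)-J(\pi')\bigr|\leqslant \frac{R_{\max}}{(1-\gamma)^2}\,\mathbb{E}_{s\sim d_{\pi}}\bigl\|\pi(\cdot\mid s)-\pi'(\cdot\mid s)\bigr\|_1.
\end{equation}

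The obstacles here are bookkeeping rather than conceptual. The reweighting step of the performance difference lemma must track the normalization $d_{\pi}(s)=(1-\gamma)\sum_{t}\gamma^{t}\mathrm{Pr}(s_t=s;\pi,d_0)$ so that the $\tfrac{1}{1-\gamma}$ prefactor appears correctly, and the crude bound $\lVert Q^{\pi'}\rVert_\infty\leqslant R_{\max}/(1-\gamma)$ tacitly reads $R_{\max}$ as a bound on $\lvert R\rvert$ (i.e.\ assumes $R_{\min}\geqslant -R_{\max}$). A sharper constant $\tfrac{R_{\max}-R_{\min}}{2(1-\gamma)^2}$ is obtainable by subtracting the midpoint $\tfrac{R_{\min}+R_{\max}}{2(1-\gamma)}$ from $Q^{\pi'}$ before applying H\"{o}lder --- legitimate by the same $\int(\pi-\pi')\cdot\mathrm{const}=0$ cancellation --- but the stated form suffices for the bounds that follow.
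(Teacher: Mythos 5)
Your proof is correct, but it follows a different route from the paper's. The paper does not prove this lemma from scratch: it simply cites Lemmas 4.2 and 4.3 of Xu et al.\ (2020), whose combined argument goes through occupancy measures --- first bounding $\left|J(\pi)-J(\pi')\right|$ by $\frac{R_{\max}}{1-\gamma}$ times the total-variation distance between the (state-action) occupancy measures of $\pi$ and $\pi'$, and then bounding that distance by the expected per-state policy discrepancy under $d_{\pi}$ using the same coupling machinery as Lemma A.1, with the two $\frac{1}{1-\gamma}$ factors stacking to give $\frac{R_{\max}}{(1-\gamma)^2}$. You instead invoke the Kakade--Langford performance difference lemma, rewrite the advantage term as $\int_{\mathcal{A}}(\pi-\pi')Q^{\pi'}$, and finish with H\"older plus the crude bound $\|Q^{\pi'}\|_\infty\leqslant R_{\max}/(1-\gamma)$; this is self-contained, arrives at the identical constant, and in fact mirrors the advantage-decomposition identity (Schulman et al.) that the paper itself uses later for the value-based intervention theorem, so it arguably unifies the two appendix arguments. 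One small point you handle well: the step $\|Q^{\pi'}\|_\infty\leqslant R_{\max}/(1-\gamma)$ tacitly requires $|R|\leqslant R_{\max}$ even though the paper states $R\in[R_{\min},R_{\max}]$; the cited source (and the paper's other proofs, which use the same $\frac{R_{\max}}{(1-\gamma)^2}$ factor) make the same implicit assumption, and your remark about the sharper centered constant $\frac{R_{\max}-R_{\min}}{2(1-\gamma)^2}$ is a legitimate refinement, though unnecessary for the downstream theorems.
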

\begin{proof}
It is a direct combination of Lemma 4.2 and Lemma 4.3 in~\citep{xu2020error}.
\end{proof}

\begin{theorem}
\label{th:perf_gap_append}
For any behavior policy $\pi_b$ consisting of a teacher policy $\pi_t$, a student policy $\pi_s$ and a intervention function $\mathcal{T}(s)$, the suboptimality of the student policy is bounded by
\begin{equation}
\label{eq:optimal_gap_append}
\begin{aligned}
    \left|J\left(\pi^{*}\right)-J\left(\pi_{s}\right)\right| \leqslant& \frac{\beta R_{\max}}{(1-\gamma)^{2}} \mathbb{E}_{s\sim \pi_{b}}\left\|\pi_t(\cdot \mid s)- \pi_s(\cdot \mid s)\right\|_1+\left|J\left(\pi^{*}\right)-J\left(\pi_{b}\right)\right| ,
\end{aligned}
\end{equation}
\end{theorem}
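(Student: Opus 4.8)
The plan is to reduce the claim to the two ingredients already on the table: the triangle inequality for $J$ and the performance-difference lemma stated just above, combined with the same algebraic simplification of $\pi_b$ used in the proof of Thm.~\ref{th:sdd_to_append}.

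First I would split the target quantity by inserting $J(\pi_b)$:
\begin{equation}
\left|J(\pi^*)-J(\pi_s)\right|\leqslant \left|J(\pi^*)-J(\pi_b)\right|+\left|J(\pi_b)-J(\pi_s)\right|.
\end{equation}
The second term is what needs work; the first term is carried through to the statement unchanged.

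Next I would bound $\left|J(\pi_b)-J(\pi_s)\right|$ by invoking the lemma $\left|J(\pi)-J(\pi')\right|\leqslant \frac{R_{\max}}{(1-\gamma)^2}\mathbb{E}_{s\sim d_\pi}\left\|\pi(\cdot\mid s)-\pi'(\cdot\mid s)\right\|_1$ with the specific choice $\pi=\pi_b$ and $\pi'=\pi_s$. The choice of which argument plays the role of $\pi$ matters: taking $\pi=\pi_b$ makes the expectation run over $d_{\pi_b}$, which is exactly the measure appearing in the definition of $\beta$. Then, exactly as in the proof of Thm.~\ref{th:sdd_to_append}, I would substitute $\pi_b(\cdot\mid s)=\mathcal{T}(s)\pi_t(\cdot\mid s)+(1-\mathcal{T}(s))\pi_s(\cdot\mid s)$, so that $\left\|\pi_b(\cdot\mid s)-\pi_s(\cdot\mid s)\right\|_1=\left\|\mathcal{T}(s)\left[\pi_t(\cdot\mid s)-\pi_s(\cdot\mid s)\right]\right\|_1$, and finally use the definition $\beta=\frac{\mathbb{E}_{s\sim d_{\pi_b}}\left\|\mathcal{T}(s)\left[\pi_t(\cdot\mid s)-\pi_s(\cdot\mid s)\right]\right\|_1}{\mathbb{E}_{s\sim d_{\pi_b}}\left\|\pi_t(\cdot\mid s)-\pi_s(\cdot\mid s)\right\|_1}$ to rewrite $\mathbb{E}_{s\sim d_{\pi_b}}\left\|\mathcal{T}(s)\left[\pi_t(\cdot\mid s)-\pi_s(\cdot\mid s)\right]\right\|_1=\beta\,\mathbb{E}_{s\sim d_{\pi_b}}\left\|\pi_t(\cdot\mid s)-\pi_s(\cdot\mid s)\right\|_1$. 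Plugging this back into the triangle inequality yields Eq.~\ref{eq:optimal_gap_append}.

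There is no serious obstacle here; the only point requiring a little care is the orientation of the performance-difference lemma (it controls $|J(\pi)-J(\pi')|$ by an expectation over $d_\pi$, not $d_{\pi'}$), so one must apply it in the direction that produces $d_{\pi_b}$ in order for the $\beta$ bookkeeping to close cleanly. Everything else is a direct reuse of steps already carried out for Thm.~\ref{th:sdd_to_append}.
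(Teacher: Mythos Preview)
Your proposal is correct and follows essentially the same route as the paper: bound $|J(\pi_b)-J(\pi_s)|$ via the performance-difference lemma with $\pi=\pi_b$, collapse $\|\pi_b-\pi_s\|_1$ to $\|\mathcal{T}(s)[\pi_t-\pi_s]\|_1$ using the mixture form of $\pi_b$, rewrite via the definition of $\beta$, and combine with $|J(\pi^*)-J(\pi_b)|$ by the triangle inequality. The only cosmetic difference is that the paper carries out the bound on $|J(\pi_b)-J(\pi_s)|$ first and applies the triangle inequality at the end, whereas you open with the triangle inequality; the substance is identical.
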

\begin{proof}
\begin{equation}
    \begin{aligned}
        \left|J\left(\pi_b\right)-J\left(\pi_{s}\right)\right| &\leqslant \frac{R_{\max}}{(1-\gamma)^2}\mathbb{E}_{s \sim d_{\pi_b}} \left\|\pi_b(\cdot \mid s)- \pi_s(\cdot \mid s)\right\|_1\\
        &=\frac{R_{\max}}{(1-\gamma)^2}\mathbb{E}_{s \sim d_{\pi_b}}\left\|\mathcal{T}(s)\pi_t(\cdot \mid s)+(1-\mathcal{T}(s))\pi_s(\cdot \mid s)-\pi_s(\cdot \mid s)\right\|_1\\
        &=\frac{R_{\max}}{(1-\gamma)^2}\mathbb{E}_{s \sim d_{\pi_b}}\left\|\mathcal{T}(s)\left[\pi_t(\cdot \mid s)-\pi_s(\cdot \mid s)\right]\right\|_1\\
        &=\frac{\beta R_{\max}}{(1-\gamma)^{2}} \mathbb{E}_{s\sim \pi_{b}}\left\|\pi_t(\cdot \mid s)- \pi_s(\cdot \mid s)\right\|_1.\\
        \left|J\left(\pi^{*}\right)-J\left(\pi_{s}\right)\right|&\leqslant\left|J\left(\pi_b\right)-J\left(\pi_{s}\right)\right|+\left|J\left(\pi^{*}\right)-J\left(\pi_{b}\right)\right|\\
        &\leqslant\frac{\beta R_{\max}}{(1-\gamma)^{2}} \mathbb{E}_{s\sim \pi_{b}}\left\|\pi_t(\cdot \mid s)- \pi_s(\cdot \mid s)\right\|_1+\left|J\left(\pi^{*}\right)-J\left(\pi_{b}\right)\right|.
    \end{aligned}
\end{equation}
\end{proof}

\begin{theorem}[Restatement of Thm.~\ref{act_explore}]
\label{th:act_explore_append}
With the action distributional intervention function $\mathcal{T}_\text{action}(s)$, the return of the behavior policy $J(\pi_b)$ is lower and upper bounded by
\begin{equation}
\label{eq:act_explore_append}
\begin{aligned}
J(\pi_t)+&\frac{\sqrt{2}(1-\beta) R_{\max }}{(1-\gamma)^{2}} \sqrt{H-\varepsilon}\geqslant J(\pi_b)\geqslant J(\pi_t)-\frac{\sqrt{2}(1-\beta) R_{\max }}{(1-\gamma)^{2}} \sqrt{H-\varepsilon}
\end{aligned}
\end{equation}
where $R_{\max}=\max\limits_{s,a}r(s,a)$ is the maximal possible reward, $H=\mathbb{E}_{s\sim d_{\pi_b}} \mathcal{H}(\pi^t(\cdot|s))$ is the average entropy of the teacher policy during shared control.

\end{theorem}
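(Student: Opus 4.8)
I would obtain both directions of \eqref{eq:act_explore_append} from a single application of the performance-difference lemma stated above (the one asserting $|J(\pi)-J(\pi')|\leqslant\frac{R_{\max}}{(1-\gamma)^{2}}\mathbb{E}_{s\sim d_{\pi}}\|\pi(\cdot\mid s)-\pi'(\cdot\mid s)\|_1$), since that estimate is already two-sided. Taking $\pi=\pi_b$ and $\pi'=\pi_t$ gives
\[
\left|J(\pi_b)-J(\pi_t)\right|\leqslant\frac{R_{\max}}{(1-\gamma)^{2}}\,\mathbb{E}_{s\sim d_{\pi_b}}\left\|\pi_b(\cdot\mid s)-\pi_t(\cdot\mid s)\right\|_1 .
\]
The first manipulation is to use $\pi_b(\cdot\mid s)=\mathcal{T}(s)\pi_t(\cdot\mid s)+(1-\mathcal{T}(s))\pi_s(\cdot\mid s)$ with $\mathcal{T}(s)\in\{0,1\}$, so that $\pi_b(\cdot\mid s)-\pi_t(\cdot\mid s)=(1-\mathcal{T}(s))(\pi_s(\cdot\mid s)-\pi_t(\cdot\mid s))$: the policy gap lives only on the non-intervened states, and by the definition of the weighted intervention rate $\beta$ this expectation equals $(1-\beta)\,\mathbb{E}_{s\sim d_{\pi_b}}\|\pi_t(\cdot\mid s)-\pi_s(\cdot\mid s)\|_1$. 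This is where the $(1-\beta)$ prefactor comes from.

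It then remains to show that the surviving average policy discrepancy is at most $\sqrt{2(H-\varepsilon)}$. For this I would invoke Pinsker's inequality, $\|\pi_t(\cdot\mid s)-\pi_s(\cdot\mid s)\|_1\leqslant\sqrt{2\,\KL(\pi_t(\cdot\mid s)\,\|\,\pi_s(\cdot\mid s))}$, and Jensen's inequality for the concave map $\sqrt{\cdot}$ to move the expectation inside the square root, reducing the problem to upper bounding $\mathbb{E}_{s\sim d_{\pi_b}}\KL(\pi_t(\cdot\mid s)\,\|\,\pi_s(\cdot\mid s))$. The bridge to the intervention rule is the identity $\KL(\pi_t(\cdot\mid s)\,\|\,\pi_s(\cdot\mid s))=\mathcal{H}(\pi_t(\cdot\mid s))-\mathbb{E}_{a\sim\pi_t(\cdot\mid s)}[\log\pi_s(a\mid s)]$ (cross-entropy equals entropy plus KL), which turns the exact quantity appearing in the definition of $\mathcal{T}_\text{action}$ in \eqref{act_to} into the controller of the KL term: on a state where control is not taken, the defining inequality pins $\mathbb{E}_{a\sim\pi_t(\cdot\mid s)}[\log\pi_s(a\mid s)]$ against $\varepsilon$, which via the identity yields a per-state bound of the form $\KL(\pi_t(\cdot\mid s)\,\|\,\pi_s(\cdot\mid s))\leqslant\mathcal{H}(\pi_t(\cdot\mid s))-\varepsilon$. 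Averaging over $d_{\pi_b}$ turns $\mathcal{H}(\pi_t(\cdot\mid s))$ into $H$, and assembling the $\frac{R_{\max}}{(1-\gamma)^{2}}$, $(1-\beta)$, $\sqrt{2}$ and $\sqrt{H-\varepsilon}$ pieces yields the claimed two-sided bound.

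The step I expect to require the most care is the bookkeeping around the intervention indicator: the threshold constraint is only available on the non-intervened states, so the Pinsker/Jensen chain must be applied to the indicator-weighted discrepancy (using $(1-\mathcal{T}(s))^{2}=1-\mathcal{T}(s)$ to slide the indicator under the root) in an order that still delivers the clean product $(1-\beta)\sqrt{H-\varepsilon}$ rather than a messier state-conditioned average. A second point to settle before the computation closes is the sign/normalization convention for the differential entropy $\mathcal{H}$ and the threshold $\varepsilon$ in \eqref{act_to}, so that the argument of the square root is nonnegative and coincides with $H-\varepsilon$; once that convention is pinned down, the remaining inequalities are routine.
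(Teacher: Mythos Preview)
Your proposal is correct and follows essentially the same route as the paper's proof: apply the performance-difference lemma with $\pi=\pi_b$, $\pi'=\pi_t$, use $\pi_b-\pi_t=(1-\mathcal{T}(s))(\pi_s-\pi_t)$ to extract the $(1-\beta)$ factor, then Pinsker, the entropy/cross-entropy decomposition of $\KL(\pi_t\|\pi_s)$, the threshold from \eqref{act_to}, and finally Jensen to pull the expectation inside the square root. You are in fact more careful than the paper in flagging the indicator bookkeeping (the threshold is only guaranteed on non-intervened states) and the entropy sign convention; the paper's derivation glosses over both points and simply writes the per-state bound $\sqrt{\mathcal{H}(\pi_t(\cdot|s))-\varepsilon}$ before applying Jensen.
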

\begin{proof}
\begin{equation}
    \begin{aligned}
     \left|J\left(\pi_b\right)-J\left(\pi_{t}\right)\right| &\leqslant\frac{R_{\max}}{(1-\gamma)^2}\mathbb{E}_{s \sim d_{\pi_b}} \left\|\pi_b(\cdot \mid s)- \pi_t(\cdot \mid s)\right\|_1\\
     &=\frac{R_{\max}}{(1-\gamma)^2}\mathbb{E}_{s \sim d_{\pi_b}} \left\|\mathcal{T}(s)\pi_t(\cdot \mid s)+(1-\mathcal{T}(s))\pi_s(\cdot \mid s)-\pi_t(\cdot \mid s)\right\|_1\\
     &=\frac{(1-\beta)R_{\max}}{(1-\gamma)^2}\mathbb{E}_{s \sim d_{\pi_b}} \left\|\pi_s(\cdot \mid s)- \pi_t(\cdot \mid s)\right\|_1\\
     &\leqslant\frac{\sqrt{2}(1-\beta)R_{\max}}{(1-\gamma)^2}\mathbb{E}_{s \sim d_{\pi_b}}\sqrt{\mathrm{D}_{\mathrm{KL}}(\pi_t(\cdot|s)\|\pi_s(\cdot|s))}\\
     &=\frac{\sqrt{2}(1-\beta)R_{\max}}{(1-\gamma)^2}\mathbb{E}_{s \sim d_{\pi_b}}\sqrt{\mathbb{E}_{a\sim\pi_t(\cdot|s)}\left[\log\pi_t(a|s)-\log\pi_s(a|s)\right]}\\
     &=\frac{\sqrt{2}(1-\beta)R_{\max}}{(1-\gamma)^2}\mathbb{E}_{s \sim d_{\pi_b}}\sqrt{\mathcal{H}(\pi^t(\cdot|s)-\varepsilon}\\
     &\leqslant\frac{\sqrt{2}(1-\beta)R_{\max}}{(1-\gamma)^2}\sqrt{H-\varepsilon}.
    \end{aligned}
\end{equation}
    Therefore, we obtain
\begin{equation}
\begin{aligned}
    \frac{\sqrt{2}(1-\beta)R_{\max}}{(1-\gamma)^2}\sqrt{H-\varepsilon}\geqslant J\left(\pi_b\right)&-J\left(\pi_{t}\right)\geqslant -\frac{\sqrt{2}(1-\beta)R_{\max}}{(1-\gamma)^2}\sqrt{H-\varepsilon}\\
    J(\pi_t)+\frac{\sqrt{2}(1-\beta) R_{\max }}{(1-\gamma)^{2}} \sqrt{H-\varepsilon}\geqslant &J(\pi_b)\geqslant J(\pi_t)-\frac{\sqrt{2}(1-\beta) R_{\max }}{(1-\gamma)^{2}} \sqrt{H-\varepsilon},
\end{aligned}
\end{equation}
which concludes the proof.
\end{proof}

To prove Thm.~\ref{th:v_to_return}, we introduce a useful lemma from~\citep{schulman2015trust}.
\begin{lemma}
\begin{equation}
J(\pi)=J(\pi')+\mathbb{E}_{s_t, a_t\sim\tau_\pi}\left[\sum_{t=0}^{\infty} \gamma^{t} A_{\pi'}\left(s_{t}, a_{t}\right)\right]
\end{equation}
\end{lemma}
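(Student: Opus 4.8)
The plan is to prove this identity --- the Kakade--Langford performance difference lemma, quoted here from~\citep{schulman2015trust} --- by the standard one-step telescoping argument; the only ingredients are Bellman consistency and a convergent geometric tail. First I would rewrite the advantage of the reference policy $\pi'$ in its one-step Bellman form. Since $A_{\pi'}(s,a)=Q^{\pi'}(s,a)-V^{\pi'}(s)$ and $Q^{\pi'}(s,a)=R(s,a)+\gamma\,\mathbb{E}_{s'\sim P(\cdot\mid s,a)}V^{\pi'}(s')$, this gives
\begin{equation*}
A_{\pi'}(s,a)=R(s,a)+\gamma\,\mathbb{E}_{s'\sim P(\cdot\mid s,a)}V^{\pi'}(s')-V^{\pi'}(s).
\end{equation*}

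Next I would substitute this into the discounted trajectory sum, where the trajectory $(s_0,a_0,s_1,\dots)$ is generated by $\pi$ with $s_0\sim d_0$. Using the tower property $\mathbb{E}[V^{\pi'}(s_{t+1})\mid s_t,a_t]=\mathbb{E}_{s'\sim P(\cdot\mid s_t,a_t)}V^{\pi'}(s')$ and taking the full expectation over $\tau_\pi$,
\begin{equation*}
\mathbb{E}_{s_t,a_t\sim\tau_\pi}\left[\sum_{t=0}^{\infty}\gamma^t A_{\pi'}(s_t,a_t)\right]=\mathbb{E}_{s_t,a_t\sim\tau_\pi}\left[\sum_{t=0}^{\infty}\gamma^t R(s_t,a_t)+\sum_{t=0}^{\infty}\bigl(\gamma^{t+1}V^{\pi'}(s_{t+1})-\gamma^t V^{\pi'}(s_t)\bigr)\right].
\end{equation*}
The second sum telescopes: its partial sums equal $\gamma^{T+1}V^{\pi'}(s_{T+1})-V^{\pi'}(s_0)$, and since rewards are bounded we have $|V^{\pi'}|\le\max(|R_{\min}|,|R_{\max}|)/(1-\gamma)$, so the leading term vanishes as $T\to\infty$ and the sum is $-V^{\pi'}(s_0)$. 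The first sum has expectation $\mathbb{E}_{s_0\sim d_0}V^{\pi}(s_0)=J(\pi)$ by definition of $V^{\pi}$ and $J$, while $\mathbb{E}[-V^{\pi'}(s_0)]=-J(\pi')$. Combining gives $\mathbb{E}_{\tau_\pi}[\sum_t\gamma^t A_{\pi'}(s_t,a_t)]=J(\pi)-J(\pi')$, i.e.\ the claim after moving $J(\pi')$ across.

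The only step that needs care --- and the main (mild) obstacle --- is justifying the interchange of expectation with the infinite sums, together with the telescoping, i.e.\ that the series converge and $\gamma^{T+1}\mathbb{E}_{\tau_\pi}V^{\pi'}(s_{T+1})\to 0$; I would handle this by dominated convergence, bounding each summand by a geometric envelope $\gamma^t\cdot\mathrm{const}$ coming from $R\in[R_{\min},R_{\max}]$ and $\gamma\in(0,1)$. With this lemma in hand the remainder of Thm.~\ref{th:v_to_return} is short: apply it with $\pi=\pi_b$ and $\pi'=\pi_t$, observe that $\mathbb{E}_{a\sim\pi_b(\cdot\mid s)}A_{\pi_t}(s,a)=(1-\mathcal{T}(s))\,\mathbb{E}_{a\sim\pi_s(\cdot\mid s)}A_{\pi_t}(s,a)$ because $\mathbb{E}_{a\sim\pi_t(\cdot\mid s)}A_{\pi_t}(s,a)=0$, bound $\mathbb{E}_{a\sim\pi_s(\cdot\mid s)}A_{\pi_t}(s,a)\ge-\varepsilon$ on the non-intervention states via the definition of $\mathcal{T}_\text{value}$, and average against $d_{\pi_b}$ with the $1/(1-\gamma)$ normalization.
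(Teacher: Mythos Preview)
Your argument is correct: it is the standard telescoping proof of the performance-difference lemma, and your handling of the tail term via bounded rewards and $\gamma\in(0,1)$ is exactly what is needed. The paper does not actually prove this lemma; it simply imports it from~\citep{schulman2015trust}, so you are supplying more detail than the paper does rather than taking a different route. Your closing preview of how the lemma feeds into Thm.~\ref{th:v_to_return} (splitting $\pi_b$ into its $\pi_t$ and $\pi_s$ components, using $\mathbb{E}_{a\sim\pi_t}A_{\pi_t}=0$, and lower-bounding the student advantage by $-\varepsilon$ on non-intervention states) also matches the paper's proof of that theorem essentially line for line.
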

\begin{theorem}[Restatement of Thm.~\ref{th:v_to_return}]
\label{th:append_v_to}
With the value-based intervention function $\mathcal{T}_\text{value}(s)$ defined in Eq.~\ref{v_to}, the return of the behavior policy $\pi_b$ is lower bounded by
\begin{equation}
\label{eq:v_to_return_append}
J\left(\pi_{b}\right) \geqslant J\left(\pi_{t}\right)-\frac{\varepsilon}{1-\gamma}.
\end{equation}
\end{theorem}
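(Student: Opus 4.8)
The plan is to use the performance-difference lemma (the last lemma stated in the excerpt) with $\pi = \pi_b$ and $\pi' = \pi_t$, which gives
\begin{equation}
J(\pi_b) = J(\pi_t) + \mathbb{E}_{s_t,a_t\sim\tau_{\pi_b}}\left[\sum_{t=0}^{\infty}\gamma^t A_{\pi_t}(s_t,a_t)\right],
\end{equation}
where $A_{\pi_t}(s,a) = Q^{\pi_t}(s,a) - V^{\pi_t}(s)$. So it suffices to lower bound the expected discounted sum of teacher-advantages along trajectories generated by $\pi_b$. First I would rewrite the expectation over the behavior-policy trajectory by conditioning on which policy is in control at each state: at a state $s$ with $\mathcal{T}(s)=1$ the action is drawn from $\pi_t$, and $\mathbb{E}_{a\sim\pi_t(\cdot|s)}A_{\pi_t}(s,a) = V^{\pi_t}(s) - V^{\pi_t}(s) = 0$; at a state with $\mathcal{T}(s)=0$ the action is drawn from $\pi_s$, and by the definition of $\mathcal{T}_\text{value}$ in Eq.~\ref{v_to}, $\mathcal{T}(s)=0$ forces $V^{\pi_t}(s) - \mathbb{E}_{a\sim\pi_s(\cdot|s)}Q^{\pi_t}(s,a) \leqslant \varepsilon$, i.e. $\mathbb{E}_{a\sim\pi_s(\cdot|s)}A_{\pi_t}(s,a) \geqslant -\varepsilon$.

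Combining these two cases, at every state visited by $\pi_b$ the one-step expected teacher-advantage $\mathbb{E}_{a\sim\pi_b(\cdot|s)}A_{\pi_t}(s,a)$ is at least $-\varepsilon$ (it is exactly $0$ on intervened states and at least $-\varepsilon$ on student-controlled states). Plugging this pointwise bound into the series and summing the geometric factors,
\begin{equation}
\mathbb{E}_{s_t,a_t\sim\tau_{\pi_b}}\left[\sum_{t=0}^{\infty}\gamma^t A_{\pi_t}(s_t,a_t)\right] \geqslant -\sum_{t=0}^{\infty}\gamma^t \varepsilon = -\frac{\varepsilon}{1-\gamma},
\end{equation}
which yields $J(\pi_b) \geqslant J(\pi_t) - \frac{\varepsilon}{1-\gamma}$, the claimed bound. (The version in the main text, Eq.~\ref{eq:v_to_return}, carries an extra factor $(1-\beta)$; presumably this comes from a sharper accounting that uses $\beta$ to quantify the fraction of advantage mass on student-controlled states, but the appendix statement Thm.~\ref{th:append_v_to} is the cleaner one I would prove first.)

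\textbf{Main obstacle.} The one genuinely delicate point is justifying the interchange of expectation over the $\pi_b$-trajectory with the pointwise advantage bound: one must be careful that the decomposition over $\mathcal{T}(s)=0/1$ is taken with respect to the on-trajectory state distribution of $\pi_b$ (not of $\pi_t$ or $\pi_s$), and that the advantages are those of the fixed teacher policy $\pi_t$ — the performance-difference lemma is exactly the tool that lets us use $A_{\pi_t}$ along $\pi_b$'s own trajectory distribution, so after invoking it the rest is a term-by-term inequality. A secondary subtlety is handling the case where $\mathcal{T}$ is stochastic or where $\mathbb{E}_{a\sim\pi_s}Q^{\pi_t}(s,a)$ could in principle exceed $V^{\pi_t}(s)$ (giving a positive contribution that only helps the lower bound), so the bound $\mathbb{E}_{a\sim\pi_b(\cdot|s)}A_{\pi_t}(s,a)\geqslant -\varepsilon$ holds in all cases and no further care is needed there.
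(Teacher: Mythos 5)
Your proposal is correct and follows essentially the same route as the paper: invoke the performance-difference lemma with $\pi=\pi_b$, $\pi'=\pi_t$, split each visited state by $\mathcal{T}_\text{value}(s)\in\{0,1\}$ (zero expected teacher-advantage on intervened states, at least $-\varepsilon$ on student-controlled states by Eq.~\ref{v_to}), and sum the geometric series. The only difference is that the paper keeps track of the student-controlled fraction to state the slightly tighter bound $-(1-\beta)\varepsilon/(1-\gamma)$ of the main-text Thm.~\ref{th:v_to_return}, whereas your termwise bound directly yields the appendix statement, exactly as you surmised.
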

\begin{proof}
\begin{equation}
\begin{aligned}
    J(\pi_b)-J(\pi_t)&=\mathbb{E}_{s_n, a_n\sim\tau_{\pi_b}}\left[\sum_{n=0}^{\infty} \gamma^{n} A_{t}\left(s_{n}, a_{n}\right)\right]\\
    &=\mathbb{E}_{s_n, a_n\sim\tau_{\pi_b}}\left[\sum_{n=0}^{\infty} \gamma^{n} \left[Q_t\left(s_{n}, a_{n}\right)-V_t(s_n)\right]\right]\\
    &=\mathbb{E}_{s_n \sim\tau_{\pi_b}}\left[\sum_{n=0}^{\infty} \gamma^{n} \left[\mathbb{E}_{a\sim\pi_b(\cdot|s_n)}Q_t\left(s_{n}, a\right)-V_t(s_n)\right]\right]\\
    &=\mathbb{E}_{s_n \sim\tau_{\pi_b}}\left[\sum_{n=0}^{\infty} \gamma^{n} \left[\mathcal{T}(s_n)\mathbb{E}_{a\sim\pi_t(\cdot|s_n)}Q_t\left(s_{n}, a\right)+(1-\mathcal{T}(s_n))\mathbb{E}_{a\sim\pi_s(\cdot|s_n)}Q_t\left(s_{n}, a\right)-V_t(s_n)\right]\right]\\
    &=\mathbb{E}_{s_n \sim\tau_{\pi_b}}\left[\sum_{n=0}^{\infty} \gamma^{n} \left[(1-\mathcal{T}(s_n))\left[\mathbb{E}_{a\sim\pi_s(\cdot|s_n)}Q_t\left(s_{n}, a\right)-V_t(s_n)\right]\right]\right]\\
    &=(1-\beta)\mathbb{E}_{s_n \sim\tau_{\pi_b}}\left[\sum_{n=0}^{\infty} \gamma^{n} \left[\mathbb{E}_{a\sim\pi_s(\cdot|s_n)}Q_t\left(s_{n}, a\right)-V_t(s_n)\right]\right]\\
    &\geqslant-(1-\beta)\mathbb{E}_{s_n \sim\tau_{\pi_b}}\left[\sum_{n=0}^{\infty} \gamma^{n} \varepsilon\right]\\
    &=-\frac{(1-\beta)\varepsilon}{1-\gamma},
\end{aligned}
\end{equation}
which concludes the proof.
\end{proof}

Then we prove the corollary related to safety-critical scenarios.

\begin{corollary}[Restatement of Cor.~\ref{cor:1}]
With safety-critical value-based intervention function $\hat{\mathcal{T}}_{\text{value}}(s)$, the expected cumulative training cost of the behavior policy $\pi_b$ is upper bounded by
\begin{equation}
\label{eq:13}
    C(\pi_b)\leqslant C(\pi_t)+\frac{(1-\beta)\epsilon}{\eta(1-\gamma)}+\frac{1}{\eta}\left[J(\pi_b)-J(\pi_t)\right].
\end{equation}
\end{corollary}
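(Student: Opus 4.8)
The plan is to reduce this corollary entirely to Theorem~\ref{th:v_to_return} applied with the combined reward $\hat r(s,a)=r(s,a)-\eta c(s,a)$ in place of $r$. The first step is to record the linearity identity
\[
\hat J(\pi)=J(\pi)-\eta C(\pi)\qquad\text{for every policy }\pi,
\]
which follows immediately by plugging $\hat r=r-\eta c$ into the discounted sum defining $\hat J$ and using linearity of expectation (both $\eta$ and $\gamma$ are fixed constants, so this is just regrouping a convergent series). I would state this for $\pi_b$ and $\pi_t$, which are the only two policies that appear.

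The second step is to invoke Theorem~\ref{th:v_to_return} (equivalently its appendix restatement Thm.~\ref{th:append_v_to}) verbatim, but for the MDP with reward $\hat r$. The hypotheses there are exactly that $\pi_b$ is induced by $\pi_t$, $\pi_s$ and the value-based intervention function built from the reward under consideration; since $\hat V,\hat Q,\hat{\mathcal{T}}_{\text{value}}$ were defined by substituting $\hat r$ for $r$, the relevant intervention function is $\hat{\mathcal{T}}_{\text{value}}$, and the conclusion reads
\[
\hat J(\pi_b)\ \geqslant\ \hat J(\pi_t)-\frac{(1-\beta)\varepsilon}{1-\gamma}.
\]
Here $\beta$ is unchanged from Thm.~\ref{th:sdd_to}: it is the intervention rate weighted by the $L_1$ policy discrepancy $\|\pi_t(\cdot\mid s)-\pi_s(\cdot\mid s)\|_1$, which does not depend on which reward function is used, so no new quantity is introduced.

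The final step is pure algebra: substitute $\hat J(\pi)=J(\pi)-\eta C(\pi)$ on both sides to get $J(\pi_b)-\eta C(\pi_b)\geqslant J(\pi_t)-\eta C(\pi_t)-\frac{(1-\beta)\varepsilon}{1-\gamma}$, move $C(\pi_b)$ to one side and everything else to the other, and divide by $\eta>0$, yielding
\[
C(\pi_b)\ \leqslant\ C(\pi_t)+\frac{(1-\beta)\varepsilon}{\eta(1-\gamma)}+\frac{1}{\eta}\left[J(\pi_b)-J(\pi_t)\right],
\]
which is the claimed bound. The only point that needs genuine care — and the step I would check most carefully — is that Theorem~\ref{th:v_to_return} really does transfer to $\hat r$: one must note that $\hat r$ is a legitimate bounded reward (since $c$ is bounded, $\hat r$ takes values in a finite interval) and that the advantage-decomposition lemma from~\citep{schulman2015trust} driving that proof is reward-agnostic, so nothing in the original argument breaks. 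Everything beyond that is bookkeeping.
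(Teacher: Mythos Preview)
Your proposal is correct and follows exactly the same route as the paper: derive the linearity identity $\hat J(\pi)=J(\pi)-\eta C(\pi)$ from $\hat r=r-\eta c$, apply Theorem~\ref{th:v_to_return} to the $\hat r$-MDP to get $\hat J(\pi_b)\geqslant\hat J(\pi_t)-\frac{(1-\beta)\varepsilon}{1-\gamma}$, and rearrange. In fact your sign bookkeeping is cleaner than the appendix version, which writes $\hat r=r+\eta c$ in the displayed chain (a typo inconsistent with the main-text definition); your $\hat J=J-\eta C$ is the one that actually yields the stated upper bound on $C(\pi_b)$ after dividing by $\eta>0$.
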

\begin{proof}
We define expected return under policy $\pi$ with combined reward $\hat{r}$ as $\hat{J}(\pi)$, therefore
\begin{equation}
\label{eq:14}
\begin{aligned}
    \hat{J}(\pi)&=\mathbb{E}_{s_{0}\sim d_0,a_{t} \sim \pi\left(\cdot \mid s_{t}\right), s_{t+1} \sim p\left(\cdot \mid s_{t}, a_{t}\right)}\left[\sum_{t=0}^{\infty} \gamma^{t} \hat{r}\left(s_{t}, a_{t}\right)\right] \\
    &=\mathbb{E}_{s_{0}\sim d_0,a_{t} \sim \pi\left(\cdot \mid s_{t}\right), s_{t+1} \sim p\left(\cdot \mid s_{t}, a_{t}\right)}\left[\sum_{t=0}^{\infty} \gamma^{t} \left[r\left(s_{t}, a_{t}\right)+\eta c\left(s_{t}, a_{t}\right)\right]\right] \\
    &=J(\pi)+\eta C(\pi)
\end{aligned}
\end{equation}
According to Thm.~\ref{th:append_v_to}, under $\hat{T}_{\mathrm{value}}(s)$ we have 
\begin{equation}
\label{eq:15}
    \hat{J}\left(\pi_{b}\right) \geqslant \hat{J}\left(\pi_{t}\right)-\frac{\varepsilon}{1-\gamma}.
\end{equation}
Eq.~\ref{eq:13} can be immediately proved by combining Eq.~\ref{eq:14} and Eq.~\ref{eq:15}.
\end{proof}
\subsection{Discussions on the Results}
\label{append:discussion}
In Thm.~\ref{act_explore}, the average entropy of the teacher policy $H$ and the threshold for action-based intervention $\varepsilon$ is included in the bound.
% \pzh{Add one sentence linking to Thm3.3. Otherwise the reader don't know ``What you are talking? Why you suddenly discuss H and $\epsilon$? What are they?''}
We provide intuitive interpretations on the influence of $H$ and $\varepsilon$ here. For reference, the action-based intervention function $T_{action}=1$ when $\mathbb{E}_{a \sim \pi_t(\cdot \mid s)}\left[\log \pi_s(a \mid s)\right]<\varepsilon$. According to Thm 3.3 of our paper, a larger $\varepsilon$ leads to smaller discrepancy between the returns of the behavior and teacher policies. This is because $\varepsilon$ is the threshold for the action-based intervention function. If the action likelihood is less than $\varepsilon$, the teacher policy will take over the control. A larger $\varepsilon$ means more teacher intervention, constraining the behavior policy to be closer to the teacher policy, which leads to a smaller discrepancy in their returns.  The influence of $H$ can be similarly analyzed. A larger $H$ leads to larger return discrepancy. Intuitively, this is because with higher entropy, the teacher policy tends to have a more “averaged” or multi-modal distribution over the action space. So the policy distributions of the student and teacher are more likely to have overlaps, leading to a higher action likelihood. In turn, the intervention criterion is less likely to be satisfied, leading to fewer teacher interventions. In general, the intuitive interpretation of Thm. 3.3 indicates that if we would like larger return discrepancy, i.e. larger performance upper bound as well as smaller lower bound, we should use smaller intervention threshold and teacher policy with higher entropy, and vice versa. Thm. 3.3 has a gap with the actual algorithm in that the algorithm uses a value-based intervention function which is based on Thm. 3.4. Nevertheless, the intuitive interpretation may enlighten future work on how to choose a proper teacher policy in teacher-student shared control.

With respect to the tightness, Thm.~\ref{act_explore} has a squared planning horizon $\frac{1}{(1-\gamma)^2}$ in the discrepancy term. This is in accordance with many previous works (Thm. 1 in~\citep{xu2020error}, Thm. 4.1 in \citep{janner2019when} and Thm. 1 in~\citep{schulman2015trust}), which include $(1-\gamma)^2$ in the denominator when it comes to differences of the cumulative return, given the difference in the action distribution. The order of $\frac{1}{1-\gamma}$ in Thm.~\ref{act_explore} is tight, which dominates the gap in accumulated return. Nevertheless, the other constant terms, e.g. $R_{\max}$ and the average entropy, can be tighter given some additional assumptions. We did not derive a tighter bound since the derivation will not be related to the main contribution of this paper, which is the new type of intervention function. Thm.~\ref{act_explore} and Thm.~\ref{th:v_to_return} in their current forms are enough to demonstrate that the value-based intervention function has the advantage of providing more efficient exploration and better safety guarantee compared with action-based intervention function.
\section{The Algorithm}
The workflow of TS2C during training is show in Alg.~\ref{alg:main}.
\label{append:algo}
\begin{algorithm}[H]
  \caption{The workflow of TS2C during training}
  \label{alg:main}
\begin{algorithmic}[1]
  \STATE {\bfseries Input:} Warmup steps $W$; Scale of warmup noise $\sigma$; Training steps $N$; Teacher policy $\pi_t$.
  \STATE Initialize student policy $\pi_s^\theta$, a set of parameterized Q-function for teacher policy $\mathbf{Q}^\phi$, parameterized Q-function for student policy $Q^\psi$ and the replay buffer $D$.
  \FOR{$i=1$ {\bfseries to} $W$}
  \STATE Observe state $s_i$ and sample $a_i\sim \pi_t(\cdot|s)+\mathcal{N}(0, \sigma)$.
  \STATE Step the environment with $a_{i}$ and store the tuple $\left(s_{i}, a_{i}, r_i, s_{i+1}\right)$ to $D$.
  \STATE Update $\phi$ with Temporal-Difference loss in Eq.~\ref{eq:td}.
  \ENDFOR
   
    \FOR{$i=1$ {\bfseries to} $N$}
      \STATE Observe state $s_i$ and sample $a_t\sim \pi_t(\cdot|s_i)$, $a_s\sim\pi_s^\theta(\cdot|s_i)$.
      \STATE Compute $\mathcal{T}_\text{ts2c}(s_i)$ with Eq.~\ref{eq:ts2c_to}, behavior policy $\pi_b(\cdot|s_i)$ and $a_b$.
      \STATE Step the environment with $a_{b}$ and store the tuple $\left(s_{i}, a_{b}, r_i, s_{i+1}, \mathcal{T}_\text{value}(s_{i+1})\right)$ to $D$.
    %   \pzh{So we don't need to store $a_t$ and $a_s$ separately?}\xzh{Yes, since we don't have CQL loss here.}
      \STATE Update $\psi$ in the student Q-function with the loss in Eq.~\ref{eq:loss-q}.
      \STATE Update $\theta$ in the student policy with the loss in Eq.~\ref{eq:loss-pi}.
  \ENDFOR 
\end{algorithmic}
\end{algorithm}
\section{Additional Experiment Demonstrations}
\label{append:exp_setup}
\subsection{Demonstrations of Driving Scenarios}
The demonstrations of several driving scenarios are shown in Fig.~\ref{fig:append_overview_env}. We provide a demonstrative video showing the agent behavior trained with PPO and our TS2C algorithm in the supplementary materials. 
\begin{figure}[H]
    \centering
    \includegraphics[width=\textwidth]{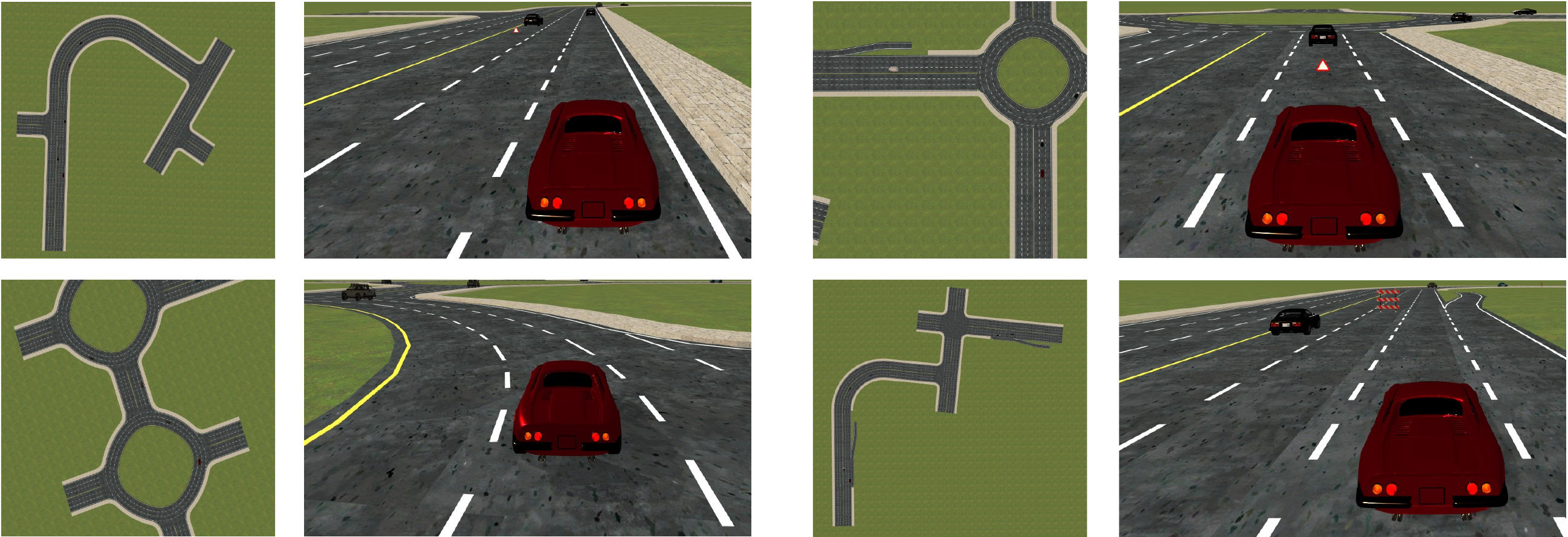}
    \caption{Four examples of the traffic scenes in MetaDrive.}
    \label{fig:append_overview_env}
\end{figure}
\subsection{Hyper-parameters}
The hyper-parameters used in the experiments are shown in the following tables. In the TS2C algorithm, larger values of the intervention threshold $\varepsilon_1$ and $\varepsilon_2$ will lead to a more strict intervention criterion and the steps with teacher control will be fewer. In order to control the policy distribution discrepancy, we choose $\varepsilon_1$ and $\varepsilon_2$ to ensure the average intervention rate to be less than 5\%. Nevertheless, different $\varepsilon_1$ in the intervention function has little influence on the algorithm performance, as shown in Fig.~\ref{fig:ablation} of our paper. The coefficient for intervention minimization $\lambda$ is simply set to 1. If used in other environments, it may need some adjustments to fit the reward scale. The coefficient for maximum entropy learning $\alpha$ is updated during training as in the SAC algorithm. The number of warmup timesteps is empirically chosen so that the expert value function can be properly trained. Other parameters follow the setting in EGPO~\citep{peng2021safe}. The hyper-parameters of other algorithms follow their original setting.

\begin{table}[H]
\begin{minipage}{0.45\linewidth}
\centering
\caption{TS2C (Ours)}
% \label{tab:hyper-parameters}
\begin{tabular}{@{}ll@{}}
\toprule
Hyper-parameter             & Value  \\ \midrule
Discount Factor $\gamma$   & 0.99  \\
$\tau$ for target network update & 0.005 \\
Learning Rate               & 0.0001 \\ 
Environmental horizon $T$ & 2000 \\
Warmup Timesteps $W$& 50000\\
\# of Ensembled Value-Functions $N$ & 10\\
Variance of Gaussian Noise $C$ & 0.5\\
Intervention Minimization Ratio $\lambda$ & 1\\
Value-based Intervention Threshold $\varepsilon_1$ & 1.2\\
Value-based Intervention Threshold $\varepsilon_2$ & 2.5\\
Activation Function & Relu \\
Hidden Layer Sizes & [256, 256] \\
\bottomrule
\end{tabular}
\end{minipage}\hfill
\begin{minipage}{0.45\linewidth}
\centering
\caption{EGPO~\citep{peng2021safe}}
% \label{tab:hyper-parameters}
\begin{tabular}{@{}ll@{}}
\toprule
Hyper-parameter             & Value  \\ \midrule
Discount Factor $\gamma$   & 0.99  \\
$\tau$ for target network update & 0.005 \\
Learning Rate               & 0.0001 \\ 
Environmental horizon $T$ & 2000 \\
Steps before Learning start & 10000\\
Intervention Occurrence Limit $C$ & 20\\
Number of Online Evaluation Episode & 5 \\
$K_p$  & 5 \\
$K_i$  & 0.01 \\
$K_d$ & 0.1 \\
CQL Loss Temperature $\beta$ & 3.0 \\
Activation Function & Relu \\
Hidden Layer Sizes & [256, 256] \\
\bottomrule
\end{tabular}
\end{minipage}
\end{table}

\begin{table}[H]
\begin{minipage}{0.45\linewidth}
\centering
\caption{Importance Advising~\citep{torrey2013teaching}}
% \label{tab:hyper-parameters}
\begin{tabular}{@{}ll@{}}
\toprule
Hyper-parameter             & Value  \\ \midrule
Discount Factor $\gamma$   & 0.99  \\
$\tau$ for target network update & 0.005 \\
Learning Rate               & 0.0001 \\ 
Environmental horizon $T$ & 2000 \\
Warmup Timesteps $W$& 50000\\
\# of Actions Sampled $N$ & 10\\
Variance of Gaussian Noise $C$ & 0.5\\
Range-based Intervention Threshold $\varepsilon$ & 2.8\\
Activation Function & Relu \\
Hidden Layer Sizes & [256, 256] \\
\bottomrule
\end{tabular}
\end{minipage}\hfill
\begin{minipage}{0.45\linewidth}
\centering
\caption{SAC~\citep{haarnoja2018soft}}
% \label{tab:hyper-parameters}
\begin{tabular}{@{}ll@{}}
\toprule
Hyper-parameter             & Value  \\ \midrule
Discount Factor $\gamma$   & 0.99  \\
$\tau$ for Target Network Update & 0.005 \\
Learning Rate               & 0.0001 \\ 
Environmental Horizon $T$ & 2000 \\
Steps before Learning starts & 10000\\
Activation Function & Relu \\
Hidden Layer Sizes & [256, 256] \\
\bottomrule
\end{tabular}
\end{minipage}
\end{table}

% \subsection{Explanation on Performance of DAgger-R Algorithm}

% In Fig.~\ref{fig:append_dagger} we show detailed information of the DAgger-R algorithm, including total cost and average speed during training and testing. In the early stage of DAgger-R training, the training cost is large due to the discontinuity of behavior policies, which can be attributed to the random control switching between teacher and student policies. With the decay of intervention rate, we find the student keeps very low velocity with teacher occasionally intervening and thus has low training cost after about 70k timesteps. Meanwhile, the student learns to drive normally in test environment without teacher but it only learns lane keeping in straight roads, maintaining low reward during evaluation.

% \begin{figure}[H]
%     \centering
%     \includegraphics[width=0.8\textwidth]{}
%     \caption{Detailed information of the DAgger-R algorithm.}
%     \label{fig:append_dagger}
% \end{figure}
\section{Additional Experiment Results}
% We include additional experiment results here.
\subsection{Additional Performance Comparisons on MetaDrive}
\label{append:exp_result}
In Fig.~\ref{fig:append_q2_1}, we show the results of TS2C trained with various levels 
of teachers compared with baseline algorithms without shared control. Apart from the Fig.~\ref{fig:v_to_plot} in the main paper presenting the training results of TS2C with the medium level of teacher policy, here we present the performance of TS2C trained with the high, medium and low levels of teacher policy.
The value-based intervention proposed by TS2C can utilize all these teacher policies, leading to safer and more efficient training compared to traditional RL algorithms.

Fig.~\ref{fig:append_q1_1} shows the results with different levels of teacher policy. Besides the testing reward and the training cost shown in Fig.~\ref{fig:q1} of the main paper, 
we show the training reward and test success rate of TS2C compared with baseline methods with the Teacher-Student Framework~(TSF) respectively. Our TS2C algorithm still achieves the best performance among baseline algorithms when evaluated with these two metrics.
% \pzh{One sentence to summarize the result of this figure!!}

\begin{figure}[H]
    \centering
    \includegraphics[width=\textwidth]{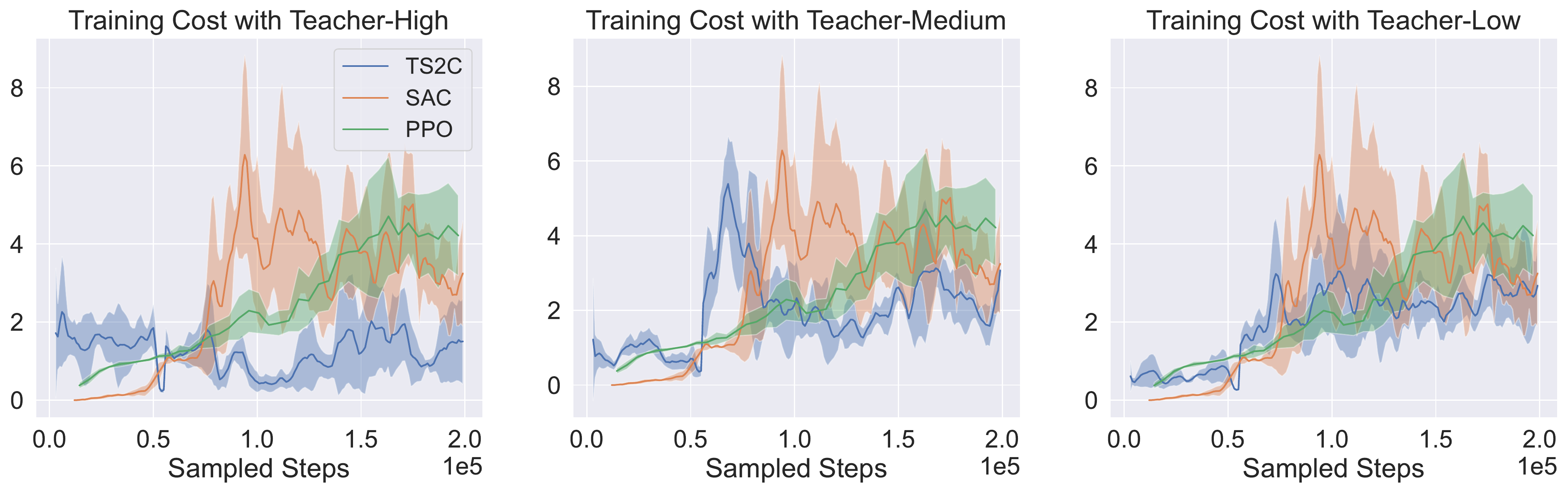}
    % \vspace{1em}
    \includegraphics[width=\textwidth]{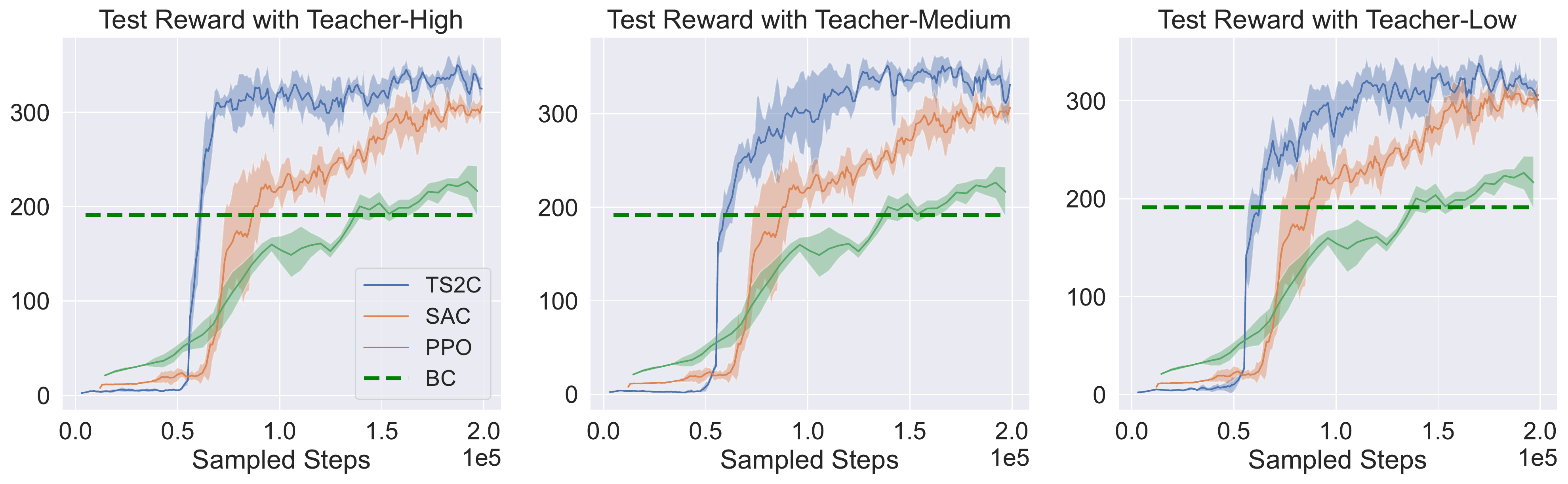}
    \caption{Comparison of training cost and test reward between our method TS2C and other algorithms without shared control.}
    \label{fig:append_q2_1}
\end{figure}

\begin{figure}[H]
    \centering
    % \includegraphics[width=\textwidth]{}
    % \vspace{1em}
    \includegraphics[width=\textwidth]{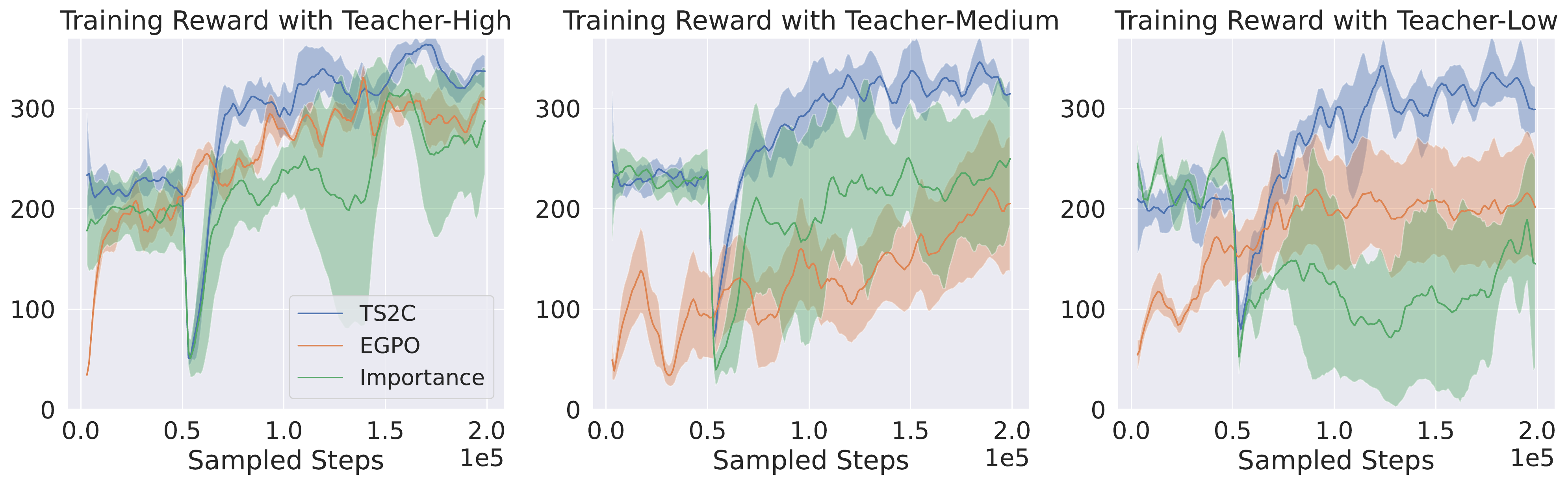}
    % \vspace{1em}
    \includegraphics[width=\textwidth]{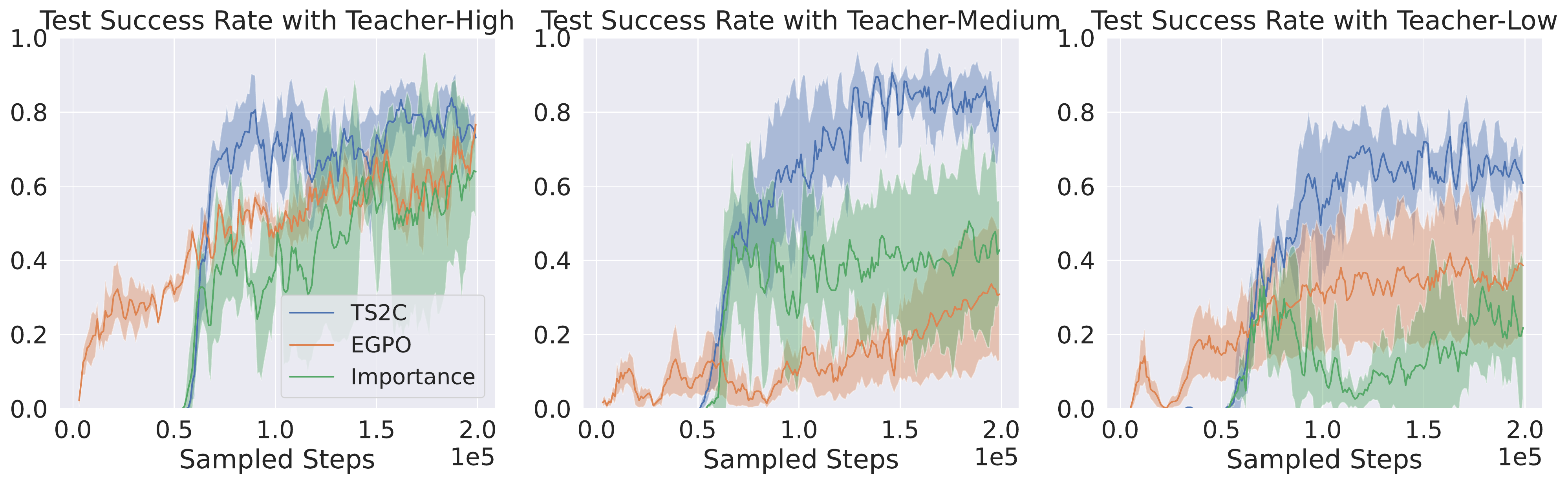}
    \caption{Comparison of training reward and test success rate between our method TS2C and other algorithms with shared control.}
    \label{fig:append_q1_1}
\end{figure}

\subsection{Ablation Studies}
\label{append:ablation}
\begin{figure*}[t]
    \centering
    \includegraphics[width=0.98\textwidth]{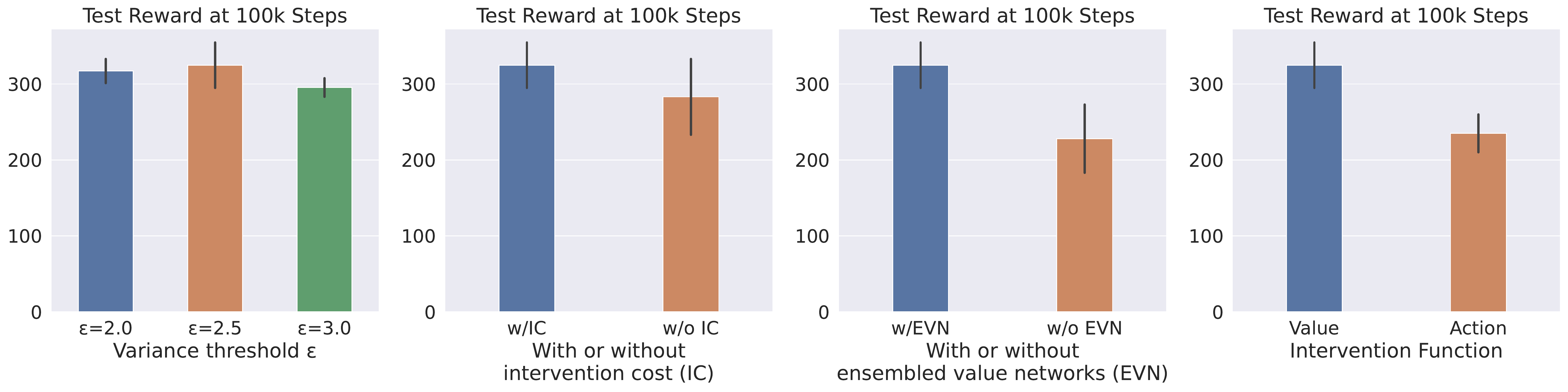}
    \caption{Ablation Studies for different variance thresholds, the intervention cost, ensembled value networks and different intervention functions.}
    \label{fig:ablation}
\end{figure*}
 We conduct ablation studies and present the results in Fig.~\ref{fig:ablation}. We find the intervention cost and ensembled value networks are important to the algorithm's performance, while different variance thresholds in the intervention function has little influence. Also, TS2C with action-based intervention function behaves poorly in accordance with the theoretical analysis in Section~\ref{section:theoretical-analysis-different-forms-takeover}.

\subsection{Discussions on Experiment Results}
 \label{append:exp_discuss}
In Fig.~\ref{fig:mujoco}, our TS2C algorithm can outperform SAC in all three MuJoCo environments taken into consideration. On the other hand, though the EGPO algorithm has the best performance in the Pendulum environment, it struggles in the other two environments, namely Hopper and Walker. This is because the action space of the pendulum environment is only one-dimensional. In this simple environment, the action-based intervention of the EGPO algorithm is effective. The policy only needs slight adjustments based on the imperfect teacher to work properly. In other words, the distance between the optimal action and the teacher action is small. However, in more complex environments like Hopper and Walker, the distance between the two is large. As the action-based intervention is too restrictive, the EGPO algorithm based on such intervention fails to achieve good performance.

In Fig.~\ref{fig:q3}, the performance of EGPO with a SAC policy as the teacher policy is very poor. This is because the employed SAC teacher is less stochastic than the PPO policy. Student's actions have less likelihood in teacher's action distribution and are less tolerated by the action-based intervention function in EGPO, leading to large intervention rate and consequently large distributional shift.  Our proposed TS2C algorithm does not access teacher internal action distribution and instead intervenes based on the state-action values of teacher policy, so it is robust to the stochasticity of teacher policy.

% \subsection{Additional Results on MuJoCo}
% \label{mujoco}
% We conduct an additional experiment in the MuJoCo simulator. The original code is adapted to fit the Hopper environment. We train the PPO algorithm and use its checkpoint with average return 1531 as the teacher policy. When training the TS2C algorithm in the Hopper environment, we follow the warmup-training procedure and keep the hyperparameters unchanged. The comparative results are shown in Fig.~\ref{fig:hopper}. 

% \begin{figure}[h]
%     \centering
%     \includegraphics[width=0.5\textwidth]{}
%     \caption{Comparison of test reward between the TS2C, SAC and the EGPO algorithm. Our TS2C algorithm achieves highest performance when it converges.}
%     \label{fig:hopper}
% \end{figure}

% As shown in the results, the TS2C algorithm can still achieve best performance in the Hopper environment among baseline algorithms. The SAC algorithm has an unstable performance curve and inferior overall performance. The EGPO algorithm shows a limited performance and can not outperform the imperfect teacher.

\end{document}